%% file: main.tex
\documentclass{document2col} 

\usepackage[american]{babel}

\usepackage{natbib} 

\usepackage{mathtools} 
\usepackage{booktabs} 
\usepackage{tikz} 

\usepackage{amssymb}
\usepackage{amsthm}
\usepackage{amsmath}
\usepackage{enumitem}
\usepackage{xspace}
\usepackage{cleveref}

\newtheorem{theorem}{Theorem}
\newtheorem{lemma}{Lemma}
\newtheorem{conjecture}{Conjecture}

\usepackage{tikz}
\usetikzlibrary{positioning}
\usetikzlibrary{positioning,fit}

\theoremstyle{definition} 
\newtheorem{definition}{Definition}[section]

\newcommand{\skel}[1]{\text{skeleton of } #1}

\usepackage{xparse}
\NewDocumentCommand{\ind}{o}{\mathrel{\mathpalette{\independenT}{\perp}}\IfValueT{#1}{_{#1}}}
\newcommand{\independenT}[2]{\rlap{$#1#2$}\mkern2mu{#1#2}}

\newcommand{\mnsl}[0]{\textsc{Markov Network Learning with Errors}}
\newcommand{\bnsl}[0]{\textsc{Bayesian Network Learning with Errors}}
\newcommand{\cbn}[0]{\textsc{Closest Bayesian Network within a Graph Class}}
\newcommand{\mabbr}[0]{$k$-\textsc{MNSL}\xspace}
\newcommand{\babbr}[0]{$k$-\textsc{BNSL}\xspace}

\title{Learning Bayesian and Markov Networks with an Unreliable Oracle}

\author[1]{Juha Harviainen}
\author[2]{Pekka Parviainen}
\author[3]{Vidya Sagar Sharma}
\affil[1]{University of Helsinki, Finland}
\affil[2]{University of Bergen, Norway}
\affil[3]{Indian Institute of Technology Madras, India}
\date{}

\begin{document}
\maketitle

\begin{abstract}
    We study constraint-based structure learning of Markov networks and Bayesian networks in the presence of an unreliable conditional independence oracle that makes at most a bounded number of errors. For Markov networks, we observe that a low maximum number of vertex-wise disjoint paths implies that the structure is uniquely identifiable even if the number of errors is (moderately) exponential in the number of vertices. For Bayesian networks, however, we prove that one cannot tolerate any errors to always identify the structure even when many commonly used graph parameters like treewidth are bounded. Finally, we give algorithms for structure learning when the structure is uniquely identifiable. 
\end{abstract}

\section{Introduction}

Probabilistic graphical models are used to represent joint distributions of several random variables. Typically, they consist of two parts that are a graphical structure expressing how the distribution factorizes and parameters specifying the distribution.
There are several different types of probabilistic graphical models, among which this work concentrates on two commonly used models: Markov networks and Bayesian networks. The graph structure in a Markov network is an undirected graph and in a Bayesian network it is a directed acyclic graph (DAG).

Both Markov and Bayesian networks are often learned from data, with the present work focusing on the constraint-based approach to learning. There, one conducts conditional independence tests and constructs a graph that expresses conditional independencies that match the test results. After one has learned the structure, the (distribution of) parameters can be estimated. We study only the first task, which is called structure learning.

Typically, constraint-based structure learning algorithms such as the PC algorithm \citep{Spirtes91pc} are guaranteed to find the correct graph if certain assumptions hold. To make sure that a correct graph exists, one assumes that the data was generated from a distribution that is faithful to an undirected graph or a DAG for Markov and Bayesian networks, respectively. Another crucial assumption is that conditional independence tests always yield correct results. In other words, one assumes that there is infinite amount of data or one equivalently has access to a conditional independence oracle.

In practice, conditional independence is evaluated with statistical tests on data. This leads to occasional errors and theoretical guarantees do not hold in practice. Motivated by this, we study theoretical properties of structure learning in a relaxed setting when we only have access to an unreliable oracle, that is, the oracle might make a small bounded number of errors. We are then interested in how many errors the oracle can be allowed to make if we want to be guaranteed to recover the hidden graph and how this affects the computational complexity of structure learning.

Our first observation is that the number of allowed errors depends on the structure of the hidden graph. To this end, we introduce a property called {\em $k$-identifiability}. If a graph (in case of  Markov networks) or a Markov equivalence class (in case of Bayesian networks) is $k$-identifiable then it can be identified uniquely given any oracle which makes at most $k$ errors. We show that Markov networks with small maximum pairwise connectivity \citep{Korhonen2024} are $k$-identifiable even when $k$ is exponential in the number of vertices (\Cref{thm:identifiability_markov}). The same does not hold for Bayesian networks; we rule out the possibility of bounding the number of errors by many graph parameters that have turned out to be useful in other situations (\Cref{sec:bn-identifiability}). We also notice that computing exact $k$ seems difficult and provide results for graphs whose structure is a chain (\Cref{lem:mn-nearest} and \Cref{thm:closest-MEC-for-path-graphs}).

Finally, we study structure learning with an unreliable oracle. We show that the structure of a Markov network can be found in time $n^{2k + O(1)} \cdot 2^n$ where $n$ is the number of vertices (Theorem~\ref{thm:learning_markov}) and the structure of a Bayesian network in time $n^{2k + O(1)} 2^{n(k + O(1))}$ (Theorem~\ref{thm:learning_bayes}). In contrast, Markov networks can be learned with $O(n^2)$ queries and in polynomial time with an error-free oracle. Constraint-based structure learning for Bayesian networks is NP-hard even with an error-free oracle \citep{Bouckaert94}. We further show that, in the worst case, one needs to query all the possible ${n \choose 2} 2^{n - 2}$ conditional independence tests even when the oracle makes at most $1$ error and we are promised that the hidden graph is one of two given candidates (Theorems~\ref{thm:queries_markov} and \ref{thm:queries_bayes}).

{\bf Related work.} The presence of errors has lead to development of variants of the PC algorithms, such that robust PC \citep{Kalisch2008} and conservative PC \citep{10.5555/3020419.3020468}, which are better suitable to handling errors. Another approach is to pose structure learning as a constraint satisfaction problem and to minimise (weighted) sum of violated test results \citep{10.5555/3020751.3020787}.

Recently, \citet{faller2025differentnotionsredundancyconditionalindependencebased} have studied error correction in structure learning. Their work concentrates on what kind of redundant tests are most useful.

\section{Preliminaries}

We denote an undirected graph by $G = (V, E)$ and a directed graph by $D = (V, A)$, where $V$, $E$, and $A$ are the sets of vertices, undirected edges of the form $\{u, v\} \subseteq V$, and directed edges or arcs of the form $(u, v) \in V \times V$, respectively. The endpoints of both types of edges are $u$ and $v$. 
A $v_1$--$v_\ell$ path is a sequence of distinct vertices $v_1, v_2, \dots, v_\ell \in V$ such that the graph has an edge of the form $\{v_i, v_{i+1}\}$, $(v_i, v_{i+1})$, or $(v_{i+1}, v_i)$ for all $i \in \{1, 2, \dots, \ell - 1\}$. Vertices $v_2, v_3, \dots, v_{\ell - 1}$ are the internal vertices of the path. The \emph{skeleton} of a directed graph is its underlying undirected graph. Throughout this paper, we use $n = |V|$ to denote the number of vertices.

In probabilistic graphical modeling, both \emph{Markov} and \emph{Bayesian networks} represent a set of variables as vertices and capture their conditional independencies with the structure of the network that is either an undirected graph or a DAG, respectively. 
For a detailed overview of probabilistic graphical modeling, see the textbook of \cite{Koller09}.
Since we focus only structure learning and not parameter estimation, we will---with a mild abuse of terminology---use the terms Markov network and undirected graph interchangeably as well as the terms Bayesian network and DAG.
To learn the structure called the \emph{hidden graph} that corresponds to the data-generating process, one of the main approaches is to perform conditional independence queries for the variables and then find a graph that matches them the best. A Markov network~$G$ encodes the claim that variables $u$ and $v$ are conditionally independent given $S \subseteq V \setminus \{u, v\}$ if every $u$--$v$ path has an internal vertex from $S$, that is, they are \emph{separated} by $S$. 

For a Bayesian network $D$, a more involved notion of \emph{d-separation} is needed to encode the conditional independencies. A \emph{collider} on an $u$--$v$ path $v_1, v_2, v_3, \dots, v_{\ell - 1}, v_\ell$ with $u = v_1$ and $v = v_\ell$ is a vertex $v_i$ with $i \in \{2, 3, \dots, \ell - 1\}$ such that the graph has edges $(v_{i-1}, v_i)$ and $(v_{i+1}, v_i)$. Variables $u$ and $v$ are then d-separated by $S$ if for every $u$--$v$ path $v_1, v_2, v_3, \dots, v_{\ell - 1}, v_\ell$
there is $i \in \{2, 3, \dots, \ell - 1\}$ such that either
\begin{enumerate}[label=(\roman*)]
    \item $v_i$ is not a collider on the path and $v_i \in S$; or
    \item $v_i$ is a collider on the path and neither $v_i$ nor any of its descendants are in $S$, where $v_j$ is a descendant of $v_i$ if there exists a directed path from $v_i$ to $v_j$. 
\end{enumerate}
Notably, if there is an undirected or a directed edge between two vertices, then they are not (d-)separated given any subset of vertices.

Markov and Bayesian networks that encode the same set of conditional independencies are said to belong to the same \emph{Markov equivalence class} (MEC).
Two Markov networks with different structures are never Markov equivalent, but this is not always the case for Bayesian networks because of d-separabilities. For Bayesian networks, Markov equivalence is characterized by the two networks having the same underlying undirected graph and the same set of \emph{v-structures}, that is, triples of vertices $u$, $v$, and $w$ with the graph having edges $(u, v)$ and $(w, v)$ but neither $(u, w)$ nor $(w, u)$. 

For variables $u, v \in V$ and $S \subseteq V$, we will write $u \ind v \mid S$ if $u$ and $v$ are conditionally independent given $S$. Further, we write $u \ind[G] v \mid S$ if $S$ separates them in a Markov network $G$ and $u \ind[D] v \mid S$ if $S$ d-separates them in a Bayesian network $D$. For conditional dependence and lack of separation, we use the symbol $\not\ind$. We assume that the underlying probability distribution of the variables is \emph{faithful} to some Markov network $G$, that is, $u \ind v \mid S$ if and only if $u \ind[G] v \mid S$. Similarly, we assume faithfulness to some Bayesian network $D$ for the results relating to Bayesian networks.

We further assume that we have access to a conditional independence (CI) oracle $\mathcal{Q}$ which returns \verb|Independent| to a query $\mathcal{Q}(u, v, S)$ if $u$ and $v$ are conditionally independent given $S$ and \verb|Not independent| otherwise, for $u, v \in V$ and $S\subseteq V \setminus \{u, v\}$. 

Suppose we have queried the conditional independencies from an oracle $\mathcal{Q}$ for all variables $u, v \in V$ and $S \subseteq V$, but oracle may have made up to $k$ errors. These errors may be arbitrary and possibly even adversarial. We define the \emph{(d-)separation distance} of a graph to $\mathcal{Q}$ as the number of conditional independence queries whose outcome differs from the one implied by the structure of the graph. The (d-)separation distance of a MEC to $\mathcal{Q}$ is the (d-)separation distance of any of its members. Distances between two Markov networks, two Bayesian networks, and two MECs are defined analogously. In particular, we write $d(M,M')$ for the distance of two MECs $M$ and $M'$. We then study the impact of the number of errors on whether we can identify the MEC of the hidden graph uniquely and in that case output a member from the MEC. For Markov networks, that undirected graph is also unique.

\mnsl{}\\
\emph{Input:} Vertex set $V$, CI oracle $\mathcal{Q}$, error bound $k$\\
\emph{Output:} Output a unique undirected graph within separation distance $k$ to $\mathcal{Q}$ or that none exists or it is not unique

\bnsl{}\\
\emph{Input:} Vertex set $V$, CI oracle $\mathcal{Q}$, error bound $k$\\
\emph{Output:} Output any DAG from a unique MEC within d-separation distance $k$ to $\mathcal{Q}$ or that none exists or it is not unique

For conciseness, we abbreviate the problems as \mabbr{} and \babbr{}, respectively. Note that if $k=0$, then the problem reduces to constraint-based structure learning with an error-free oracle. Graphs from distinct Markov equivalence classes differ for at least (d-)separation query, so a unique solution can always be identified with some number of queries in this case.

We say that an undirected graph $G$ is {\em $k$-identifiable} if separation distance from every other graph to $G$ is at least $2k + 1$. In other words, \mabbr{} with any oracle that makes at most $k$ errors returns \verb|Unique| if the hidden graph is $G$. Analogously, a DAG $D$ is $k$-identifiable by an oracle that makes $k$ errors if d-separation distance from any other MEC to the MEC of $D$ is at least $2k+1$. 

\section{$k$-Identifiability}

All undirected graphs and MECs are $0$-identifiable. That is, given our assumption that the distribution is faithful to an undirecred graph or DAG, they can be uniquely identified by a CI oracle that do not make any errors. We also notice that when $k$ increases then fewer graphs will be $k$-identifiable. Thus, it is interesting to study what is the largest $k$ with which a specific graph is $k$-identifiable. To compute this, it is enough to find the nearest neighbor of the given graph or MEC with respect to separation or d-separation distance. If the distance is $d$ then the graph or MEC is $(d/2 - 1)$-identifiable.

In this section, we study how the structure of the hidden graph affects $k$-identifiability. First, we ask whether we can derive bounds for the $k$ as a function of graph parameters. Then, we study how we can compute the largest $k$ for a graph.

\subsection{Parameterised bounds}

\subsubsection{Markov networks}

Let us first analyse $k$-identifiablility in Markov networks. First observation is that there are graphs which are not $k$-identifiable for any $k>0$. Consider a complete graph $G_1 = (V, E_1)$. Let $G_2= (V, E_2)$ be an almost complete graph where $E_2 = E_1\setminus \{v, u\}$. Clearly, no pair of vertices is separated in $G_1$ given any conditioning set. For $G_2$, there is only one conditioning set $S$ such that the $u$ and $v$ are separated given $S$. Thus, the separation distance between $G_1$ and $G_2$ is $1$. This implies that $G_1$ is not $k$-identifiable for any $k>0$.

Next, we will show that there are $k$-identifiable graphs where $k$ is exponential in $n$. We start with a definition. {\em Maximum pairwise connectivity} $\kappa$ \citep{Korhonen2024} of an undirected graph $G$ is defined as follows. For two vertices $u, v\in V$, let $\kappa(u, v, G)$ be the number of vertex-disjoint paths, each having at least one internal vertex, between $u$ and $v$. Now the maximum pairwise connectivity of graph $G$ is the maximum of $\kappa(u, v, G)$ over all pairs $u$ and $v$. That is, $\kappa (G) = \max_{u, v\in V} \kappa (u, v, G)$.

\begin{lemma} \label{lemma:markov_kappa}
    Let $\kappa(G)$ be the maximum pairwise connectivity of $G$. Given two distinct undirected graphs $G_1 = (V, E_1)$ and $G_2 = (V, E_2)$, there are at least $2^{n - 2 - \kappa(G_1)}$ separation statements with different outputs for $G_1$ and $G_2$. 
\end{lemma}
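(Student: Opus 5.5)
The plan is to find a single pair $\{u,v\}$ on which the two graphs disagree and which is adjacent in exactly one of them, and then to count the conditioning sets $S \subseteq V\setminus\{u,v\}$ that separate $u$ and $v$ in the other graph. In the graph containing the edge $\{u,v\}$, no $S$ separates $u$ and $v$. So every $S$ that separates them in the other graph gives a separation statement on which the two graphs disagree. The counting rests on Menger's theorem. For nonadjacent $u,v$, the minimum size of a vertex set $C\subseteq V\setminus\{u,v\}$ meeting every $u$--$v$ path equals the maximum number of internally vertex-disjoint $u$--$v$ paths. Since $u,v$ are nonadjacent, every such path has an internal vertex, so this number is exactly $\kappa(u,v,\cdot)$.

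The choice of the pair needs care, because the bound is stated only in terms of $\kappa(G_1)$ and not $\kappa(G_2)$. I split into two cases.

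\emph{Case 1: $E_2\setminus E_1\neq\emptyset$.} Pick $\{u,v\}\in E_2\setminus E_1$. Then $u,v$ are nonadjacent in $G_1$, so by Menger there is a separator $C\subseteq V\setminus\{u,v\}$ in $G_1$ with $|C| = \kappa(u,v,G_1)\le\kappa(G_1)$. Separation in an undirected graph is monotone in the conditioning set: if $C$ meets every $u$--$v$ path, so does every superset. Hence every $S$ with $C\subseteq S\subseteq V\setminus\{u,v\}$ gives $u \ind[G_1] v\mid S$, whereas $u \not\ind[G_2] v\mid S$. There are $2^{n-2-|C|}\ge 2^{n-2-\kappa(G_1)}$ such sets $S$.

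\emph{Case 2: $E_2\subsetneq E_1$.} This is the case since the graphs are distinct and Case 1 fails. Pick $\{u,v\}\in E_1\setminus E_2$. Now $u,v$ are nonadjacent in $G_2$, and $G_2$ is a subgraph of $G_1$. Every family of internally vertex-disjoint $u$--$v$ paths in $G_2$ is also such a family in $G_1$, and each of its paths has an internal vertex. Hence $\kappa(u,v,G_2)\le\kappa(u,v,G_1)\le\kappa(G_1)$. Applying Menger in $G_2$ and the same superset-counting argument, with the roles of $G_1$ and $G_2$ swapped, again gives at least $2^{n-2-\kappa(G_1)}$ disagreeing statements.

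The main obstacle is the asymmetry just handled. A naive argument picks an arbitrary differing edge. It then needs a small separator in whichever graph lacks the edge, and that graph could be $G_2$, whose connectivity is unconstrained. The case split avoids this: whenever we must work in $G_2$, $G_2$ is a subgraph of $G_1$, so its connectivity is inherited from $G_1$. The remaining steps, namely Menger's theorem for nonadjacent vertices and upward closure of separating sets, are routine.
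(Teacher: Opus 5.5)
Your proof is correct and takes the paper's approach: it uses the same case split on whether $E_2\setminus E_1$ is nonempty, reduces the proper-subgraph case by noting that connectivity cannot increase when passing to a subgraph, and then counts the $2^{n-2-|C|}$ supersets of a separator $C$ of size at most $\kappa(G_1)$ for a pair that is nonadjacent in one graph and adjacent in the other. You additionally make explicit the Menger step and the inequality $\kappa(u,v,G_2)\le\kappa(u,v,G_1)$, both of which the paper uses without stating them.
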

\begin{proof}
    Since $G_1$ and $G_2$ are distinct, either $G_2$ has an edge $uv$ that $G_1$ does not have or $G_2$ is a proper subgraph of $G_1$. In the latter case $\kappa(G_2) \le \kappa(G_1)$ and $G_1$ has an edge $uv$ that $G_2$ does not have, so it suffices to prove the former case.

    Because $u$ and $v$ are not adjacent in $G_1$, they are separated by some separator $S$ of size at most $\kappa(G_1)$. Any superset $T \subseteq V \setminus \{u, v\}$ of $S$ also separates them, and there are $2^{n - 2 - \kappa(G_1)}$ such sets. In contrast, these vertices are never separated in $G_2$. 
\end{proof}

The following theorem follows directly from Lemma~\ref{lemma:markov_kappa}.

\begin{theorem} \label{thm:identifiability_markov}
    Let $\kappa(G)$ be the maximum pairwise connectivity of $G$. Then $G$ is $(2^{n - \kappa(G) -3} - 1)$-identifiable.
\end{theorem}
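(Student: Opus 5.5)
The plan is to derive the statement directly from Lemma~\ref{lemma:markov_kappa} and the definition of $k$-identifiability. Recall that $G$ is $k$-identifiable when every other undirected graph $G'$ on $V$ has separation distance at least $2k+1$ to $G$. So it suffices to show that for every $G' \neq G$, the number of separation statements on which $G$ and $G'$ disagree is at least $2(2^{n-\kappa(G)-3}-1)+1 = 2^{n-\kappa(G)-2}-1$.

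First, fix an arbitrary $G' \neq G$ on the same vertex set. Apply Lemma~\ref{lemma:markov_kappa} with $G_1 = G$ and $G_2 = G'$. This gives at least $2^{n-2-\kappa(G)}$ separation statements with different outputs. Here I will check that the lemma is applied with $\kappa$ taken of $G_1 = G$, the hidden graph, and not of $G'$. This is exactly how the lemma is phrased: both cases in its proof reduce to a non-edge $uv$ of $G_1$ that is an edge of $G_2$, with the separator size bounded by $\kappa(G_1)$. So the bound depends only on $G$, as needed, since $G'$ ranges over all graphs.

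Second, compare the quantities: $2^{n-2-\kappa(G)} \ge 2^{n-\kappa(G)-2}-1 = 2k+1$ for $k = 2^{n-\kappa(G)-3}-1$. Hence every other graph is at distance at least $2k+1$, and $G$ is $k$-identifiable.

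The only point needing care is the degenerate range where $n-\kappa(G)-3<0$, which makes $k$ negative or fractional. I would remark that the statement is then vacuous (or read as $0$-identifiability, which holds for all graphs as noted earlier). The claim is substantive only when $\kappa(G)\le n-3$. No step is a real obstacle; the slack of one between $2^{n-\kappa-2}$ and $2^{n-\kappa-2}-1$ shows the bound is not tight, which is harmless.
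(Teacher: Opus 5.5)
Your proof is correct and follows the paper's route exactly: the paper states that the theorem follows directly from Lemma~\ref{lemma:markov_kappa}, and your check $2k+1 = 2^{n-\kappa(G)-2}-1 \le 2^{n-\kappa(G)-2}$ with $k = 2^{n-\kappa(G)-3}-1$ is the intended one-line verification. One small inaccuracy in your side remark: in the lemma's second case ($G_2 \subsetneq G_1$), the argument does not use a non-edge of $G_1$; it uses a non-edge of $G_2$ and bounds the separator size by $\kappa(G_2)\le\kappa(G_1)$. This does not affect your application of the lemma as stated.
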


We note that the above result is a lower bound. That is, the graph $G$ is guaranteed to be $(2^{n - \kappa(G) -3} - 1)$-identifiable but it may be $k$-identifiable with larger values of $k$.

Theorem~\ref{thm:identifiability_markov} shows that the lower bound for $k$ grows exponentially in $n$. However, we should note that the number of different tests is ${n \choose 2} \times 2^{n-1}$ and thus the theorem does not imply that the probability of erroneous tests goes down or even stays the same when $n$ grows.

\subsubsection{Bayesian networks}\label{sec:bn-identifiability}

We would like to provide similar parameterised bounds for Bayesian networks. However, v-structures complicate learning and we can rule out many natural graph parameters.

Let us consider four hidden graphs on vertex set $V = \{v_1, v_2, \ldots, v_n\}$. The first graph $D_\emptyset = (V, \emptyset)$ is an empty graph. The second graph is $D_1 = (V, E_1)$ where the arc set $A_1$ consists of the following arcs: $(v_1, v_3)$, $(v_2, v_3)$, and $(v_i, v_{i+1})$ for $3\leq i \leq n-1$. The third graph is $D_2 = (V, E_2)$ where the graph consists of $r$ cliques of size $n/r$ with a constant $r\geq 2$. Finally, the fourth graph is $D_C = (V, E_C)$ is a complete graph. That is, one of the arcs $(v_i, v_j)$ or $(v_j, v_i)$ is in $E_C$ for every $i$ and $j$ with $i \neq j$.

First, let us analyse $k$-identifiability of the graph is $D_\emptyset$, $D_1$, $D_2$ or $D_C$. Suppose the hidden graph is $D_\emptyset$. Now for every $v_i$, $v_j$ and $S$,  $v_i$ and $v_j$ are d-separated by $S$. On the other hand, $v_i$ and $v_j$ are never d-separated in any graph with an arc $(v_i, v_j)$ yielding at least $2^{n-2}$ different d-separation statements compared to $D_\emptyset$. Thus, $D_\emptyset$ is $(2^{n-3} - 1)$-identifiable.

Now suppose that the hidden graph is $D_1$. Consider a graph $D' = (V, A')$ with an arc set $A'$ consists of the following arcs: $(v_1, v_2)$,$(v_1, v_3)$, $(v_2, v_3)$, and $(v_i, v_{i+1})$ for $3\leq i \leq n-1$. DAGs $D_1$ and $D'$ are illustrated in Figure~\ref{fig:dags}. Vertices $v_1$ and $v_2$ are d-separated by the empty set in $D_1$ but not in $D'$. All other d-separations are the same. Thus, the d-separation distance between $D_1$ and $D'$ is $1$. This implies that $D_1$ is not $k$-identifiable for any $k>0$.

\begin{figure}[!htp]
\centering
\begin{tabular}{cc}
     \includegraphics[width=0.3\linewidth]{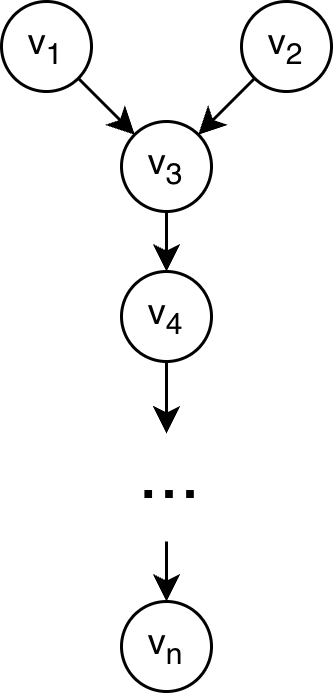} &   \includegraphics[width=0.3\linewidth]{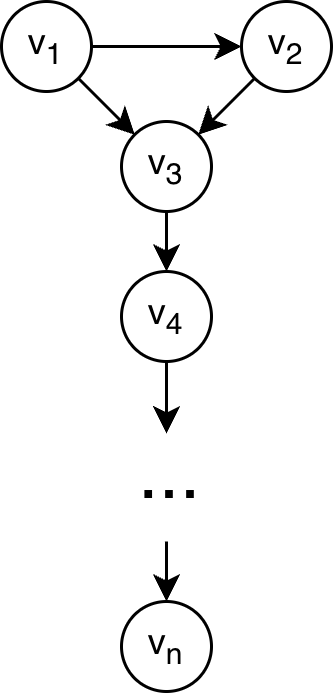}\\
     (a) $D_1$ & (b) $D'$\\ 
\end{tabular}
\caption{Two sparse graphs whose d-separation disctance is 1.} \label{fig:dags}
\end{figure}

Next, let us consider $D_2$. Suppose that a graph $D''$ contains an arc $(v_i, v_j)$ that is not present in $D_2$. This implies that $v_i$ and $v_j$ are in different cliques in $D_2$ and are thus d-separated by every $S$ in $D_2$. However, they are never d-separated in $D''$ which means that there are at least $2^{n-2}$ different d-separation statements between $D_2$ and $D''$. Now suppose that $D'''$ has no arcs that are not present in $D_2$. Let us say that $D'''$ does not not have arc $(v_i, v_j)$ which is present in $D_2$; Without loss of generality, let us assume that $v_i$ and $v_j$ are in clique $1$ in $D_2$. Clearly, $v_i$ and $v_j$ are never d-separated in $D_2$. Vertices $v_i$ and $v_j$ are d-separated by at least one subset $S$ of vertices in clique 1. Furthermore, they are d-separated by union  of $S$ and any subset of the vertices in other cliques. There are $2^{n - n/r}$ such sets. Thus, the d-separation distance to the nearest neighbor of $D_2$ is at least $2^{n - n/r}$ and thus $D_2$ is $(2^{n - n/r -1} -1)$-identifiable.

Finally, let us look at the complete graph $D_C$. None of the pairs $v_i$ and $v_j$ are d-separated by any $S$ in $D_C$. Let us compare $D_C$ to an almost complete graph that does not have an arc between $v_i$ and $v_j$ (but has an arc between every other pair). Now there is exactly one set $S$ such that $v_i$ and $v_j$ are d-separated by $S$. Therefore, $D_C$ is not $k$-identifiable with any $k>0$. This is also what \citet{faller2025differentnotionsredundancyconditionalindependencebased} have observed.

To find a lower bound for $k$-identifiability, we want to find a graph parameter $p (D)$ and a monotone function $f(p)$ such that if $D$ is $k$-identifiable than $f(p(D), n) < k$ (in case of an upper bound $f(p(D), n) > k $).

In summary, if the hidden graph is either $D_\emptyset$ or $D_2$ then we can allow the oracle to make an exponential number of errors but if the hidden graph is either $D_1$ or $D_C$ then the oracle cannot make any errors.  This implies that no graph parameter $p$ such that $p(D_\emptyset) < p(D_1) < p(D_2) < p(D_C)$ can give upper or lower bounds for $k$ (also $p(D_\emptyset) < p(D') < p(D_2) < p(D_C)$ disqualifies the parameter). 

This means that many reasonable parameters such as number of arcs, maximum pairwise connectivity, treewidth, or size of the maximum undirected clique are not going to give such bounds.

Unlike with Markov networks, maximum pairwise connectivity does not give bounds for $k$-identifiability in Bayesian networks. First, maximum pairwise connectivity is defined as a property of an undirected graph. There is no well-defined way to define maximum pairwise connectivity of a DAG so we consider two alternatives: Maximum pairwise connectivity of the skeleton of a DAG or maximum pairwise connectivity of the moral graph \footnote{A moral graph of a directed acyclic graph $D = (V, A)$ is an undirected graph $G = (V, E)$ such 1) if $uv\in A$ then $\{u, v\} \in E$ and 2) if $uv\in A$ and $wv\in A$ then $\{u, w\}\in E$.} of the DAG. Regardless of the definition, maximum pairwise connectivities of $D_\emptyset$, $D_1$, $D_2$ and $D_C$ are $0$, $1$, $n/r - 2$ and $n-2$, respectively.

We conducted a small empirical study to see how sparsity of $k$-indentifiability. To this end, we  generated all DAGs with 5 vertices, choosing one DAG to represent every MEC and computed the distance to the nearest MEC for every MEC. Results are summarised in Table~\ref{tab:distance_distribution}. The results show that, on average sparser networks tend to have higher $k$. However, the number of arcs does not give strict bounds which can be demonstrated by comparing number of arcs in $D_\emptyset$, $D_1$, $D_2$ and $D_C$.

\begin{table}[]
    \centering
        \caption{Distances to the nearest neighbor in MECs with $5$ vertices. MECs are grouped by the number of edges and the table shows minimum, mean and maximum distance to the nearest neighbor MECs with a specific number of edges.}
    \label{tab:distance_distribution}
    \begin{tabular}{ccccc}
        \toprule
        Nr of edges & Nr of MECs & Min & Mean & Max\\
        \midrule
        0 & 1 & 8 & 8 & 8 \\
        1 & 10&  8 & 8 & 8 \\
        2 & 75 & 4 & 4.8 & 8\\
        3 & 350 & 2 & 4.7 & 6\\
        4 & 1120 & 1 & 3.8 & 7 \\
        5 & 2130 & 1 & 2.3 & 6 \\
        6 & 2595 & 1 & 1.5 & 4 \\
        7 & 1730 & 1 & 1.1 & 3 \\
        8 & 690 & 1 & 1 & 1 \\
        9 & 80 & 1 & 1 & 1\\
        10 & 1 & 1 & 1 & 1\\
        \bottomrule
    \end{tabular}
\end{table}

Treewidth is often used to measure complexity of a Bayesian network as inference is tractable in low treewidth networks.\footnote{Treewidth is defined as follows. A tree-decomposition of an undirected graph $G = (V, E)$ is a pair $(\mathcal{X}, T)$ where $\mathcal{X} = \{X_1, \ldots X_m\}$ is a collection of subsets of $V$ and $T$ is a tree on $\mathcal{X}$ such that 1) $\bigcup_{i = 1, \ldots m} X_i = V$, 2) for all edges $\{e, v\} \in E$, there exists $i$ such that $u\in X_i$ and $v\in X_i$, and 3) for all $i$, $j$ and $k$, if $X_j$ is on the unique path between $X_i$ and $X_k$ in $T$ then $X_i \cap X_k \subseteq X_j$. The width of a tree-decomposition is $\max_{i = 1, \ldots m} |X_i| - 1$. {\em Treewidth} of graph $G$ is the smallest width over all possible tree-decompositions. 

 Treewidth of a DAG is defined as treewidth of its moral graph.} 
We observe that treewidths of $D_\emptyset$, $D_1$, $D_2$ and $D_C$ are $0$, $2$, $n/r-1$ and $n-1$, respectively. Thus, treewidth does not provide meaningful bounds for $k$-identifiability.

It has been shown that a parameter called size of the maximum undirected clique\footnote{Size of the largest undirected clique is defined as follows. An essential graph of DAG $D=(V, A)$ is a partially directed graph whose skeleton is the same as the skeleton $D$. The essential graph has an arc $uv$ if the arc is between $u$ and $v$ is directed towards $v$ in every DAG in the equivalence class of $D$. Otherwise, the essential graph an undirected edge $\{u, v\}$.
 
A clique in an undirected graph is a set of vertices such that all vertices are adjacent to each other. The {\em size of the maximum undirected clique} is the size of largest clique after all directed arcs are removed from the essential graph.} is essential in the testing problem where one tries to test whether the distribution is faithful to a given graph or not\citep{zhang2024membership,10.5555/3692070.3693904}. 
However, it cannot be used here because sizes of maximum undirected cliques for $D_\emptyset$, $D'$, $D_2$ and $D_C$ are $0$, $1$, $n/r$ and $n$, respectively.

In conclusion, it seems difficult to find parameterised bounds for Bayesian networks.

\subsection{Finding nearest neighbors}

Ideally, we would like to compute the largest $k$ for an arbitrary DAG $D$. In principle, this can be done by generating all possible Markov equivalence classes and computing their d-separation distances to $D$. However, this is not computationally feasible in practice. Thus, we ask whether there exists a more efficient way to find the nearest neighbor.

We hypothesize that the MEC that is closest to $D$ with respect to d-separation distance is also close to $D$ ``structurewise''. In other words, we hypothesize that $D$ can be transformed to its nearest neighbor by a simple local operation such as adding an arc, removing an arc or reversing and arc. However, it seems that proving this is highly non-trivial. Here, we study nearest neighbors in a very restricted class of graphs, namely chains.

Recall that a chain is a graph where its vertices have some order $v_1, v_2, \dots, v_n$ such that the edges of the underlying undirected graph are exactly $\{v_1, v_2\}, \{v_2, v_3\}, \dots, \{v_{n-1}, v_n\}$.

\begin{theorem}\label{lem:mn-nearest}
    Nearest neighbor of a Markov network~$G$ that is a chain is obtained by adding an edge for $n \ge 3$ and is at a distance $2^{n-2} - 1$ from $G$.
\end{theorem}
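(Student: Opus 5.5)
The plan is to show that adding the edge $\{v_1,v_3\}$ gives a graph $H^\ast$ at distance exactly $2^{n-2}-1$ from the chain $G$, and then to show that every other graph $H \neq G$ is at distance at least $2^{n-2}-1$.

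\textbf{Upper bound.} Let $H^\ast$ be the chain plus the edge $\{v_1,v_3\}$. I compare the answers of $G$ and $H^\ast$ pair by pair.
\begin{enumerate}[label=(\roman*)]
\item The pair $(v_1,v_3)$ is separated in $G$ exactly when $v_2 \in S$. It is never separated in $H^\ast$. This gives $2^{n-3}$ disagreements.
\item For a pair $(v_1,v_j)$ with $j \ge 4$, $G$ separates exactly when $S \cap \{v_2,\dots,v_{j-1}\} \neq \emptyset$. $H^\ast$ separates exactly when $S \cap \{v_3,\dots,v_{j-1}\} \neq \emptyset$. They disagree precisely when $v_2 \in S$ and $v_3,\dots,v_{j-1} \notin S$, which leaves $2^{n-j}$ free choices.
\item Every other pair gives no disagreement, since every new path in $H^\ast$ still passes through the same cut vertices. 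For example, every $v_2$--$v_j$ path with $j \ge 4$ still passes through $v_3$.
\end{enumerate}
Summing, the distance is $2^{n-3}+\sum_{j=4}^{n}2^{n-j} = 2^{n-3}+2^{n-3}-1 = 2^{n-2}-1$.

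\textbf{Lower bound, case $H \supsetneq G$.} Here $H$ has an extra edge $\{v_i,v_j\}$ with $i<j$ and $m=j-i-1 \ge 1$ interior chain vertices. This pair alone gives $2^{n-2}-2^{n-2-m}$ disagreements, namely all $S$ meeting the interior. If $m \ge 2$, or if further extra edges exist, I will add disagreements from pairs $(v_i,v_l)$ with $l>j$ and $(v_a,v_j)$ with $a<i$, exactly as in the computation for $H^\ast$. A set $S$ that hits $\{v_{i+1},\dots,v_{j-1}\}$ but misses $\{v_j,\dots,v_{l-1}\}$ separates the pair in $G$ but not in $H$. I will show these counts sum to at least $2^{n-2}-1$, with equality only for $m=1$ at an end of the chain. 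A geometric-sum comparison like the one above should suffice. For an interior edge $\{v_i,v_{i+2}\}$, both sides contribute and the total only grows.

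\textbf{Lower bound, case $H$ misses a chain edge $\{v_i,v_{i+1}\}$.} This is the main obstacle, since $H$ may compensate with arbitrary extra edges and $v_i,v_{i+1}$ may be separated in $H$ only by few sets. In $G$ this pair is never separated. My plan is a charging argument over conditioning sets $S \subseteq V\setminus\{v_i,v_{i+1}\}$. Either $H$ separates $v_i$ from $v_{i+1}$ given $S$, which is a disagreement on this pair, or there is an $S$-avoiding $v_i$--$v_{i+1}$ path in $H$. Such a path must use a non-chain edge $\{v_a,v_b\}$ with $a \le i < i+1 \le b$. Then the pair $(v_a,v_b)$ is unseparated in $H$ given $S$, while it is separated in $G$ whenever $S$ hits the chain interior between them.

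I would map each such $S$ to a disagreement $(v_a,v_b,S')$ with $S'$ obtained from $S$ by a canonical modification. For instance, $S'$ could be $S$ with $v_i$ or $v_{i+1}$ toggled in, which makes $G$ separate the pair. The goal is to make this map injective up to one exceptional set, yielding at least $2^{n-2}-1$ disagreements in total. If injectivity fails, my fallback is induction on $n$: restrict to conditioning sets containing or avoiding an endpoint $v_1$ or $v_n$, reducing to a chain on $n-1$ vertices and doubling the count.
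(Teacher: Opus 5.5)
Your upper bound is the paper's (the chord from a leaf to the vertex at distance two), but your lower bound is organized differently.

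\textbf{How the two routes differ.} The paper splits by the number of added edges. Each added non-chain edge $\{x,y\}$ costs at least $2^{n-3}$ disagreements on the pair $(x,y)$ itself, whatever else changes, so two additions already give $2^{n-2}$. With at most one addition, a disconnected result leaves a chain edge whose endpoints are separated by all $2^{n-2}$ sets. This reduces everything to a single added chord or a single swap.

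You split instead into supergraphs versus graphs missing a chain edge. Your injection handles the second class uniformly, however many chords are added, with no connectivity analysis. The paper's route gets its reductions from two one-line observations, and it also shows your ``main obstacle'' is mild, since at most one chord can be present. Yours has the advantage of covering every case. The paper's written proof only evaluates the end chord and stops after reducing to a swap. It never bounds a chord placed elsewhere, nor the swap itself. (The swap is easy: the removed pair and the added pair each cost at least $2^{n-3}$.)

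\textbf{The charging step goes through.} In your charging map, $v_a,v_b\notin S$ because both lie on an $S$-avoiding $v_i$--$v_{i+1}$ path. Choose $w=v_i$ if $a<i$ and $w=v_{i+1}$ if $a=i$ (then $b\ge i+2$). Then $w\notin\{v_a,v_b\}$, $G$ separates $v_a,v_b$ given $S'=S\cup\{w\}$, and $S=S'\setminus\{w\}$ is recovered from $(v_a,v_b,S')$. So the map is injective with no exceptional set, and its image avoids the pair $\{v_i,v_{i+1}\}$. This case therefore yields at least $2^{n-2}$ disagreements, and the induction fallback is unnecessary.

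\textbf{The supergraph count also closes.} Reduce to a single chord by monotonicity, since adding edges to $H$ only removes separations. With $p=i-1$ and $q=n-j$, the pairs you list contribute $(2^m-1)(3\cdot 2^{p+q}-2^p-2^q)$. This exceeds $2^{m+p+q}-1=2^{n-2}-1$ by at least $(2^p-1)(2^q-1)\ge 0$, because $2^m\ge 2$.

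One side remark is wrong: equality for these listed pairs does not hold only for $m=1$ at an end. The chord $\{v_1,v_n\}$ also gives exactly $2^{n-2}-1$ from them. Nothing in the theorem needs that remark.
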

\begin{proof}
    Let $u$ be a leaf of the chain, $v$ the unique vertex that has distance $2$ to $u$ in $G$, and $w$ the unique vertex that has distance $1$ to $u$ in $G$. We start by proving that adding the edge~$uv$ changes the result of $2^{n - 2} - 1$ separation queries: Previously, $u$ was separated from each unique vertex $x$ whose distance from $u$ in $G$ was $\ell$ by any of the $(2^{\ell-1} - 1)2^{n - \ell - 1}$ conditioning sets that contained at least one of the vertices with distance from $u$ less than $\ell$. To remain separated after adding the edge, they have to contain any of the $2(2^{\ell-2} - 1)2^{n - \ell - 1}$ conditioning sets that contain at least one of the vertices with distance from $u$ more than $1$ and less than $\ell$. In total, this contributes 
    \[ (2^{\ell-1} - 1)2^{n - \ell - 1} - 2(2^{\ell-2} - 1)2^{n - \ell - 1} = 2^{n - \ell - 1} \]
    differing query results, and summing over $\ell=2\dots (n-1)$ yields $2^{n - 2} - 1$ queries whose outcome has changed.

    Observe next that adding at least two edges changes the outcomes of more than $2^{n-2} - 1$ queries even if we add or remove any number of additional edges. If we were to add a new edge $xy$, there is always some vertex $z$ between them in the chain such that any conditioning set containing $z$ separates them in $G$ but not after adding the edge. This then causes at least $2^{n-3}$ separation queries for the pair of vertices $x$ and $y$ to change their outcome irrevocably. If we then were to add another edge, at least one of its endpoints differs from $x$ and $y$, and thus changes at least $2^{n-3}$ other separation query outcomes. In total, this incurs at least $2^{n-2}$ changes.

    The remaining case to consider is thus when we do some number of removals and then possibly add an edge. Start by noting that removing any edge $ab$ of $G$ changes the result of at least $2^{n-2}$ separation queries: previously, $a$ and $b$ were inseparable for all $2^{n-2}$ conditioning sets, but now they are separated for all of them since they are in distinct components. To prevent at least some of the changed outcomes, we also have to add an edge $uv$ somewhere.
    Further, the graph must become connected afterwards or otherwise we again get at least $2^{n-2}$ changed outcomes: if the graph is disconnected, there must be an edge $ab$ of $G$ such that $a$ and $b$ are in distinct components and always separated. Since we can only add one edge, we can also only remove one edge to retain connectedness.
    
\end{proof}

Next, we study Bayesian networks whose skeleton is a chain. Before that, we need to introduce some definitions.

\input{definitions}

We also have the following characterization for a Bayesian network with a path skeleton.
\begin{lemma}
\label{lem:d-separation-in-paths}
Let $D$ be a Bayesian network whose skeleton is a path on $V=\{v_1,\ldots,v_n\}$, and  $\mathcal C$ be the set of colliders in $D$. For $1\le i<j\le n$, let $S_{i,j}$ be the set of internal vertices in the path between $v_i$ and $v_j$ and let $\mathcal C_{i,j} = \mathcal C\cap S_{i,j}$. Then for any $Z\subseteq V\setminus\{v_i,v_j\}$, $v_i$ and $v_j$ are d-connected given $Z$ iff $Z\cap S_{i,j}=\mathcal C_{i,j}$.
\end{lemma}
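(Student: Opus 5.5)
The plan is to reduce d-connection to a condition on a single path and then read that condition off vertex by vertex. The first step uses the fact that the skeleton of $D$ is a path, hence a tree. So for $1\le i<j\le n$ there is exactly one $v_i$--$v_j$ path in $D$: the segment $v_i, v_{i+1}, \dots, v_j$, whose internal vertices are exactly $S_{i,j}$. By the definition of d-separation in the preliminaries, $v_i$ and $v_j$ are d-connected given $Z$ iff this single path is not blocked. That means that for every $v_m\in S_{i,j}$, neither condition (i) nor condition (ii) holds at $v_m$. So the statement reduces to showing that the path is unblocked iff $Z\cap S_{i,j}=\mathcal C_{i,j}$.

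The second step, which I expect to be the only non-routine one, is to handle descendants of colliders. Let $v_m\in S_{i,j}$ be a collider, so $(v_{m-1},v_m)$ and $(v_{m+1},v_m)$ are arcs. Since the skeleton is a path, $v_{m-1}$ and $v_{m+1}$ are the only neighbours of $v_m$ in $D$, and both are parents. Hence $v_m$ has no children, and its only descendant in the sense of condition (ii) is $v_m$ itself. Therefore condition (ii) holds at a collider $v_m$ exactly when $v_m\notin Z$. I will state this carefully, because it is precisely where the path skeleton matters: in a general DAG a collider can be activated by a descendant outside $S_{i,j}$.

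The third step combines the vertex-wise conditions. For a non-collider $v_m\in S_{i,j}\setminus\mathcal C_{i,j}$, condition (i) fails iff $v_m\notin Z$. For a collider $v_m\in\mathcal C_{i,j}$, condition (ii) fails iff $v_m\in Z$, by the second step. So no internal vertex blocks the path iff $Z$ contains every vertex of $\mathcal C_{i,j}$ and no vertex of $S_{i,j}\setminus\mathcal C_{i,j}$, which is exactly $Z\cap S_{i,j}=\mathcal C_{i,j}$. Vertices of $Z$ outside $S_{i,j}$ are irrelevant: they do not lie on the unique path, and by the second step they cannot be descendants of any collider on it. Both directions of the equivalence follow at once from this chain of iff's, with the empty-interior case $j=i+1$ giving $S_{i,j}=\mathcal C_{i,j}=\emptyset$, consistent with adjacent vertices always being d-connected.
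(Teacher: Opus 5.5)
Your proof is correct and is essentially the paper's argument: the $v_i$--$v_j$ path is unique in a path skeleton, and a collider on it has both path-neighbours as parents and hence no proper descendants, so the path is open exactly when every non-collider of $S_{i,j}$ is outside $Z$ and every collider is inside $Z$. You spell out the vertex-wise equivalences (including why vertices of $Z$ outside $S_{i,j}$ are irrelevant) more carefully than the paper does. One cosmetic point: you index the path as $v_i,v_{i+1},\dots,v_j$, but the lemma is later applied to $D'$ and $D''$, whose path order differs from the index order, so $v_{m\pm1}$ should be read as the two path-neighbours of $v_m$.
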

\begin{proof}
As discussed in the introduction, $v_i$ and $v_j$ are d-separated by $Z \subseteq V\setminus \{v_i, v_j\}$ if every $v_i$--$v_j$ path $u_1, u_2, \ldots, u_{l-1},  u_l$ there is an $a \in \{2, 3, \ldots, l-1\}$ such that either $u_a \in Z$ and $D$ lacks one of the edges $(u_{a-1}, u_a)$ or $(u_{a+1}, u_a)$, or neither $u_a$ nor any of its descendant is in $Z$ and $D$ has edges $(u_{a-1}, u_a)$ and $(u_{a+1}, u_a)$.   
A path skeleton contains a unique path between any two vertices, and in such a Bayesian network, every collider has no descendants. Hence, to $d$-connect $v_i$ and $v_j$, along the unique path between $v_i$ and $v_j$, non-colliders must be excluded from $Z$, and colliders must belong to $Z$, which is equivalent to $Z\cap S_{i,j}=\mathcal C_{i,j}$.
\end{proof}

The proof of the following theorem is rather involved, and thus we present only a proof sketch here. The complete proof can be found in the appendix.

\begin{theorem}
        \label{thm:closest-MEC-for-path-graphs}
        For a Bayesian network $D$ with a chain skeleton on $n \geq 3$ vertices 
$V = \{v_1, v_2, \ldots, v_n\}$, and the family of graphs whose skeleton is a chain $\mathcal{F}$ 
with vertex set $V$, we have $d_{\mathcal{F}}(D) = 2^{n-1} - 2$. 
\end{theorem}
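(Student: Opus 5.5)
The plan is to express the d-separation distance between two chain-skeleton DAGs entirely in terms of their collider sets, using Lemma~\ref{lem:d-separation-in-paths}. Then we minimise over all other MECs in $\mathcal{F}$: first those with the same skeleton, then those with a different chain skeleton.

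\textbf{Counting formula.} Fix $i<j$ and set $\ell=j-i$. By Lemma~\ref{lem:d-separation-in-paths}, $v_i$ and $v_j$ are d-connected given $Z$ exactly when $Z\cap S_{i,j}=\mathcal C_{i,j}$. The vertices outside $S_{i,j}\cup\{v_i,v_j\}$ are unconstrained, so there are exactly $2^{n-2-(\ell-1)}=2^{n-1-\ell}$ d-connecting sets.

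Now let $D'$ have the same skeleton and collider set $\mathcal C'$. If $\mathcal C_{i,j}=\mathcal C'_{i,j}$, the two DAGs agree on every query for $(v_i,v_j)$. If $\mathcal C_{i,j}\ne\mathcal C'_{i,j}$, the two families of d-connecting sets are disjoint and of equal size, so they disagree on exactly $2\cdot 2^{n-1-\ell}=2^{n-\ell}$ queries. Since the MEC of a chain-skeleton DAG is determined by its v-structures, i.e.\ its collider set, this gives
\[ d(D,D')=\sum_{i<j:\ \mathcal C_{i,j}\neq \mathcal C'_{i,j}} 2^{n-(j-i)}. \]
The admissible collider sets are exactly the subsets of $\{v_2,\dots,v_{n-1}\}$ with no two consecutive vertices.

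\textbf{Lower bound for the same skeleton.} If $\mathcal C\neq\mathcal C'$, pick some $v_m$ in the symmetric difference. Every pair $(i,j)$ with $i<m<j$ whose $S_{i,j}$ contains no other differing vertex contributes. More simply, take $m$ to be the smallest differing index. Then every pair $(i,j)$ with $i<m<j$ and $i\ge$ (the previous differing index, none exists) differs, so all pairs $i<m<j$ with $i\le m-1$ contribute. In particular the pairs $(m-1,j)$ for $j=m+1,\dots,n$ and $(i,m+1)$ contribute.

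Comparing the sums, the cheapest case is $m=2$ (or symmetrically $m=n-1$), which gives
\[ \sum_{j=3}^{n}2^{n-j+1}=2^{n-1}-2. \]
For general $m$ one checks that $\sum_{i<m<j}2^{n-j+i}\ge 2^{n-1}-2$, which is a routine geometric-sum comparison.

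\textbf{Upper bound (achievability).} We need an admissible $\mathcal C'$ differing from $\mathcal C$ only at an endpoint-adjacent vertex $v_2$ or $v_{n-1}$. Removing a collider is always admissible. Adding $v_2$ is admissible unless $v_3\in\mathcal C$, and adding $v_{n-1}$ is admissible unless $v_{n-2}\in\mathcal C$. The awkward configuration is $v_3,v_{n-2}\in\mathcal C$ with $v_2,v_{n-1}\notin\mathcal C$. There I would first try a different chain ordering of $V$, or a move replacing $v_3$ by $v_2$, and compute its cost with the formula. I expect this case analysis to need care, and it may even reveal that the stated value needs $\mathcal F$ interpreted appropriately.

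\textbf{Different skeleton.} If $D'$ is a chain on a different ordering, then some edge $\{v_a,v_b\}$ of $D$ is absent in $D'$, and some edge of $D'$ is absent in $D$. For the pair $(v_a,v_b)$: all $2^{n-2}$ queries are dependent in $D$, whereas in $D'$ at most $2^{n-3}$ are (path distance $\ge2$). This gives at least $2^{n-3}\cdot 3/\!\cdot$ (more precisely at least $2^{n-2}-2^{n-3}$) disagreements. Two distinct Hamiltonian paths differ in at least two edges in each direction, so summing over two such pairs, plus the symmetric count for an edge of $D'$ missing in $D$, should exceed $2^{n-1}-2$.

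\textbf{Main obstacle.} I expect the main obstacle to be making the different-skeleton bound tight for small $n$ and, above all, the admissibility issue in the achievability step.
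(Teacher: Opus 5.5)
Your same-skeleton lower bound is sound and matches the paper. Every pair $(v_i,v_j)$ with $i<m<j$ straddling a vertex $v_m$ whose collider status differs contributes $2^{n-(j-i)}$, and $(2^m-2)(2^{n-m}-1)\ge 2^{n-1}-2$, with equality only at $m\in\{2,n-1\}$. The other two steps, however, have genuine gaps.

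\textbf{Achievability.} Your awkward configuration really occurs, e.g.\ $n=5$ with $\mathcal C(D)=\{v_3\}$. In that configuration \emph{no} same-skeleton witness exists. Toggling $v_2$ forces toggling $v_3$, and symmetrically at the other end, so any other admissible collider set must change the status of some $v_m$ with $3\le m\le n-2$. By strict convexity this costs strictly more than $2^{n-1}-2$; for $n=5$ the best same-skeleton alternative is at distance $18>14$. So the witness must have a different skeleton, and your proposal never supplies one. Your ``replace $v_3$ by $v_2$'' move keeps the skeleton and costs $26$ here.

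The missing idea is the paper's swap of $v_1$ and $v_2$:
take the chain $v_2 - v_1 - v_3 - \cdots - v_n$;
orient $\{v_1,v_3\}$ the way $\{v_2,v_3\}$ is oriented in $D$, so $v_3$ keeps its status;
orient $v_1\to v_2$, so $v_1$ is not a collider;
keep all other arcs.
Then $\mathcal C(D')=\mathcal C(D)\setminus\{v_2\}$. The pair $\{v_1,v_2\}$ and all pairs inside $\{v_3,\dots,v_n\}$ agree. By Lemma~\ref{lem:d-separation-in-paths}, for each $j\ge 3$:
the pair $(v_1,v_j)$ disagrees exactly on the $2^{n-j}$ sets that are correct on $\{v_3,\dots,v_{j-1}\}$ but violate the constraint on $v_2$;
the pair $(v_2,v_j)$ disagrees exactly on the $2^{n-j}$ such sets containing $v_1$.
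This gives $2\sum_{j=3}^{n}2^{n-j}=2^{n-1}-2$ for \emph{every} $D$, with no case analysis on colliders.

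\textbf{Different skeleton.} Your claim that two distinct Hamiltonian paths on $V$ differ in at least two edges in each direction is false. The swap above replaces the single edge $\{v_2,v_3\}$ by $\{v_1,v_3\}$, and for $n=3$ any two distinct chains share an edge. If the claim held, your own count would put every different-skeleton DAG at distance at least $2^{n-1}$. Combined with the awkward configuration, that would refute the theorem, which is where your doubt about $\mathcal F$ comes from.

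With a single exchanged edge, your crude bound of $2^{n-3}$ for each of the two affected pairs gives only $2^{n-2}$, which is below $2^{n-1}-2$ once $n\ge 4$. The crude bound does settle $t\ge 2$ exchanged edges, since $2t\cdot 2^{n-3}\ge 2^{n-1}$. For $t=1$ you need the paper's finer argument:
deleting $\{v_c,v_{c+1}\}$ leaves the subpaths $v_1\cdots v_c$ and $v_{c+1}\cdots v_n$;
$D''$ must rejoin them by one of $\{v_c,v_n\}$, $\{v_1,v_{c+1}\}$, $\{v_1,v_n\}$;
in each case the connecting path of \emph{every} pair $(v_a,v_b)$ with $a\le c<b$ changes.
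You must lower-bound the disagreements for all these pairs via Lemma~\ref{lem:d-separation-in-paths}, in terms of $a$, $b$, $c$ only, sum them, and minimize over $c$ to reach $2^{n-1}-2$.
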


\begin{proof}[Proof sketch of Theorem~\ref{thm:closest-MEC-for-path-graphs}:]
We first construct a DAG $D'$ by swapping the positions of $v_1$ and $v_2$ such that the set of colliders of $D'$ is a subset of the colliders of $D$. 
If $v_2$ is not a collider in $D$, then the sets of colliders in $D$ and $D'$ are the same. 
If $v_2$ is a collider in $D$, then all colliders of $D$ except $v_2$ are colliders in $D'$. 
Since the skeletons of $D$ and $D'$ are different, they are not equivalent. 
The distance between $D$ and $D'$ is $2^{n-1} - 2$ (the proof is provided in the Appendix; Theorem~\ref{thm:closest-MEC-for-path-graphs-complete}).

To prove that $D'$ is closest to $D$, we show that for every other Bayesian network $D''$ whose skeleton is a chain on $n$ vertices and that is not equivalent to $D$, the distance between $D$ and $D''$ is at least $2^{n-1} - 2$. 
The skeleton of $D''$ is either the same as that of $D$ or different from it. 
In both cases, we show that the distance between $D$ and $D''$ is at least $2^{n-1} - 2$. We use \cref{lem:d-separation-in-paths} to compare the d-connectivity in the Bayesian networks $D, D'$ and $D''$. 
\end{proof}

We note that Theorem~\ref{def:closest-MEC-with-specific-graph-structure} only considers neighbors of $D$ whose skeleton is a chain. Specifically, the theorem does not state that this is the nearest neighbor, and it is possible that there are more nearby neighbor that are for example obtained by adding an edge.

The used proof techniques unfortunately do not easily generalize for arbitrary graphs or even very restricted graph classes such as trees. Based on small-scale experiments on graphs with 5 vertices, it seems that the closest graph always differs by a single edge, which we formalize in the following hypotheses. 
\begin{conjecture}
    The nearest neighbor of any Markov network $G$ is obtained by adding or removing a single edge.
\end{conjecture}
\begin{conjecture}
    The nearest neighbor of any Bayesian network $D$ is obtained by adding, reversing, or removing a single edge.
\end{conjecture}

\section{Learning with an unreliable oracle}

In this section, we study how to utilize the query results to learn the hidden graph.
For both \mabbr and \babbr, the simplest algorithm is to simply enumerate all possible hidden graphs and pick the one minimizing the (d-)separation distance to the oracle. There are $2^{\binom{n}{2}}$ distinct undirected graphs on $n$ vertices and $2^{\binom{n}{2}} \cdot n!$ DAGs up to exponential factors \citep{Stanley06}. For each graph, we can test each of the $\binom{n}{2} \cdot 2^{n - 2}$ (d-)separabilities in polynomial time. 

If the upper bound $k$ for the number for the errors is small, the problems are solvable faster.
For a Markov network, initialize a graph such that we have an edge $uv$ if $u$ and $v$ are not separated by $V \setminus \{u, v\}$. If more than $k$ queries are inconsistent with the graph, remove or add one edge $uv$. Then, the query result for the separability of $u$ and $v$ conditioned by $V \setminus \{u, v\}$ is inconsistent with the resulting graph. Since nothing can be gained by adding and removing the same edge, we repeat the process at most $k$ times. The size of the search tree is then at most $\binom{n}{2}^k = O(n^{2k})$ and in each vertex of the tree we check all $\binom{n}{2} \cdot 2^{n - 2}$ query results once, resulting in total time complexity $n^{2k + O(1)} \cdot 2^{n}$.
\begin{theorem} \label{thm:learning_markov}
    \mabbr can be solved in time $n^{2k + O(1)} \cdot 2^{n}$.
\end{theorem}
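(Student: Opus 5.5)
The plan is to make the bounded search tree sketched just before the statement rigorous and to argue that it returns the correct answer. Fix the \emph{initial graph} $G_0$ on $V$: $uv\in E(G_0)$ iff $\mathcal{Q}(u,v,V\setminus\{u,v\})$ returns \verb|Not independent|. The key observation is that in a Markov network $u$ and $v$ are separated by $V\setminus\{u,v\}$ iff $uv$ is not an edge. Hence every graph $G$ within separation distance $k$ of $\mathcal{Q}$ differs from $G_0$ in at most $k$ edges. Each edge of the symmetric difference $E(G)\triangle E(G_0)$ corresponds to a distinct full-conditioning query on which $G$ disagrees with $\mathcal{Q}$.

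The algorithm is a recursive branching procedure. A node holds a current graph $H$ and a budget $b$, starting from $(G_0,k)$. At each node we compute the separation distance of $H$ to $\mathcal{Q}$. This takes time $\binom{n}{2}2^{n-2}\cdot n^{O(1)}$, since each separation test is a graph reachability check in $H - S$. If the distance is at most $k$, we record $H$. If $b>0$, we branch over each pair $uv$ not yet toggled on this branch, flip the edge $uv$, and recurse with budget $b-1$.

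Correctness rests on the following claim: every $G$ within distance $k$ is reached along some branch. Since $|E(G)\triangle E(G_0)|\le k$, we can toggle those pairs one at a time. Order does not matter, so some root-to-node path reaches $G$ with depth at most $k$. We never need to toggle the same pair twice, because doing so only wastes budget. Afterwards we deduplicate the recorded graphs. If none was recorded, we report that none exists. If exactly one distinct graph was recorded, we output it. Otherwise we report non-uniqueness.

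For the running time, the branching factor is at most $\binom{n}{2}$ and the depth at most $k$, so the tree has at most $\sum_{i\le k}\binom{n}{2}^i=n^{2k+O(1)}$ nodes. Each node costs $n^{O(1)}2^{n}$. Deduplication over at most $n^{2k+O(1)}$ graphs is polynomial in that count. The total is $n^{2k+O(1)}\cdot 2^{n}$.

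The main obstacle is mostly bookkeeping rather than mathematics. Two points need care. First, we must justify that the search explores all candidates and not just one; pruning rules such as ``only branch on pairs whose full-conditioning query is inconsistent'' must not cut off valid solutions, which is why I branch on all pairs. Second, uniqueness must be decided with respect to distinct graphs rather than distinct tree nodes.
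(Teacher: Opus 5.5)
Your proposal is correct and is essentially the paper's argument: the same search tree rooted at the graph read off from the full-conditioning queries $\mathcal{Q}(u,v,V\setminus\{u,v\})$, with branching factor $\binom{n}{2}$, depth $k$, and a check of all $\binom{n}{2}2^{n-2}$ queries at each node. You are actually more careful than the paper's sketch, which only expands nodes with more than $k$ inconsistencies and so leaves implicit how a second solution below an already-consistent node is detected. The one thing to tighten is deduplication: ``polynomial in that count'' could push the exponent above $2k+O(1)$, so instead compare each newly recorded graph with the first one recorded, which costs $O(n^2)$ per tree node.
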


For Bayesian networks, the previous approach is not applicable because adding edges can introduce cycles. Therefore, we instead enumerate over all $O(n^{2k} 2^{nk})$ groups of up to $k$ tests and flip their results by assuming that they are the erroneous ones. We can then run, for example, the PC algorithm \cite{Spirtes91pc} to construct a candidate for the hidden graph, for which we then assess whether all the tests are faithful. In total, this requires $n^{2k + O(1)} 2^{n(k + O(1))}$ time.
\begin{theorem} \label{thm:learning_bayes}
    \babbr can be solved in time $n^{2k + O(1)} 2^{n(k + O(1))}$.
\end{theorem}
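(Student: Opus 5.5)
Our plan is to justify the enumeration-plus-verification algorithm sketched before the statement, with separate arguments for correctness and running time. Let $\mathcal{Q}$ be the oracle and let $\mathcal{T}$ be the set of all $\binom{n}{2}2^{n-2}$ possible queries $(u,v,S)$. For a subset $F \subseteq \mathcal{T}$ with $|F| \le k$, let $\mathcal{Q}_F$ denote the oracle obtained by flipping the answers of $\mathcal{Q}$ on $F$. The algorithm iterates over all such $F$. For each $F$, it runs the PC algorithm on $\mathcal{Q}_F$ to get a candidate DAG $D_F$ (or a failure). It then computes the d-separation distance between $D_F$ and $\mathcal{Q}$ by checking every query in $\mathcal{T}$, and keeps $D_F$ if this distance is at most $k$. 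Finally, it collects the MECs of the kept candidates. Two candidates lie in the same MEC if and only if they have the same skeleton and the same v-structures, which is checkable in polynomial time. The algorithm outputs a candidate if exactly one MEC remains, reports that none exists if none remains, and reports non-uniqueness otherwise.

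For correctness, we first show completeness. Suppose $M$ is any MEC within distance $k$ of $\mathcal{Q}$, with member $D$, and let $F^\ast$ be the set of queries on which $\mathcal{Q}$ disagrees with $D$. Then $|F^\ast| \le k$, so $F^\ast$ is enumerated, and $\mathcal{Q}_{F^\ast}$ is exactly the d-separation oracle of $D$. By the standard correctness of PC with a perfect oracle faithful to a DAG, PC then returns a member of $M$. That member passes verification, so $M$ is found.

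Soundness is immediate, because every kept candidate has been explicitly verified to be within distance $k$. Hence the set of collected MECs is exactly the set of MECs within distance $k$ of $\mathcal{Q}$, and the output is correct. One subtlety is that for other choices of $F$, $\mathcal{Q}_F$ need not be faithful to any DAG, so PC may fail or return garbage. We handle this by letting PC run with a polynomial-time cap per query and discarding any output that is not a DAG. Verification filters out anything else.

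For the running time, the number of sets $F$ is $\sum_{i\le k}\binom{|\mathcal{T}|}{i} \le (|\mathcal{T}|+1)^k \le (n^2 2^n)^k$, which equals $n^{2k}2^{nk}$. PC asks at most $|\mathcal{T}| = n^{O(1)}2^n$ queries. Each flipped answer is looked up in $F$ in time $\mathrm{poly}(n,k)$, and the factor this contributes is absorbed by the $n^{O(1)}$ term. Verification checks $|\mathcal{T}|$ d-separations, each in polynomial time, for $n^{O(1)}2^n$ time. Multiplying gives $n^{2k+O(1)}2^{nk}\cdot 2^{O(n)} = n^{2k+O(1)}2^{n(k+O(1))}$.

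The main obstacle is the completeness step. We must ensure that PC with a perfect oracle recovers the MEC exactly, meaning the correct skeleton and v-structures, which we take as the known guarantee from \citet{Spirtes91pc}. We must also ensure that PC's behaviour on non-faithful flipped oracles cannot break the time bound, which is why we cap it by the number of queries.
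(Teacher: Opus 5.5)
Your proposal is correct and follows the paper's own argument: enumerate the at most $n^{2k}2^{nk}$ sets of up to $k$ queries to flip, run the PC algorithm on each flipped oracle, and verify each resulting candidate against all $\binom{n}{2}2^{n-2}$ queries. You also spell out two things the paper leaves implicit, namely the completeness/soundness argument and the handling of flipped oracles that are not faithful to any DAG; your filter (distance at most $k$ to $\mathcal{Q}$, rather than exact consistency with the flipped oracle) is equally valid.
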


A natural question is whether one can get rid of the exponential dependence in $n$. Unfortunately, this turns out to be unconditionally impossible.
\begin{theorem} \label{thm:queries_markov}
    One cannot solve \mabbr without performing $\binom{n}{2}2^{n-2}$ queries in the worst-case even if $k = 1$ and we are promised that the hidden graph is either some given graph $G_1$ or $G_2$. 
\end{theorem}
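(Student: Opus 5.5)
We prove the statement with an adversary argument built on two graphs whose separation statements differ in exactly one query. Fix two vertices $u,v$ and let $G_1$ be the complete graph on $V$. Let $G_2$ be $G_1$ with the edge $uv$ removed. As observed at the start of the Markov network analysis, $u$ and $v$ are separated in $G_2$ only by $S^\ast = V\setminus\{u,v\}$. Every other pair is adjacent in both graphs and therefore never separated. So $G_1$ and $G_2$ disagree on exactly one query, namely $(u,v,S^\ast)$.

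A single such pair does not force all queries, because an algorithm knowing $G_1,G_2$ would simply ask $(u,v,S^\ast)$. The algorithm is told the promise graphs, but the adversary chooses them after seeing the algorithm; equivalently, we exhibit a family of instances. To do this we use every pair $\{u,v\}$ and every set $S$ as a possible distinguishing query. The plan is to construct, for each query $q=(u,v,S)$, a pair of graphs $G_1^q,G_2^q$ differing only on $q$. Then we argue that any algorithm that skips some query $q$ on the common answer sequence cannot solve the instance promised to be $\{G_1^q,G_2^q\}$.

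The key construction is as follows. Let $G_2^q$ have edge set consisting of all edges except $uv$ and all $\{u,x\}$ with $x\notin S$. Then $N(u)=S$, so $S$ is the unique set separating $u$ from $v$ minimal-wise. Any superset of $S$ also separates $u$ and $v$, so this graph yields many differences. We therefore need a cleverer choice in which the two graphs differ exactly on $q$. If that fails, it is enough to make them differ in at most one or two queries, since we only need distance at most $2k=2$ for non-identifiability with $k=1$. The cleanest route: with $k=1$, two graphs at distance $2$ are indistinguishable if an oracle answers the disputed queries half one way, half the other. That oracle is within distance $1$ of both graphs, so \mabbr{} must report that the answer is not unique. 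Meanwhile, if the oracle agrees with $G_1$ everywhere, the answer is $G_1$.

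The argument then runs in three steps.

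\begin{enumerate}
\item Fix an algorithm and run it against the oracle consistent with the complete graph $G_C$. All answers are \verb|Not independent|.
\item Suppose some query $(u,v,S)$ is never asked. Consider the promise pair $G_C$ and $G_C-uv$. We choose $S$-dependent variants only if needed; for the complete graph the removed edge $uv$ is separated only by $V\setminus\{u,v\}$, so this pair covers only the queries with $S=V\setminus\{u,v\}$.
\item Flipping that unasked answer creates an oracle within one error of both graphs, so the algorithm's output is wrong on one of the two instances.
\end{enumerate}

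The main obstacle is realizing every $S$, not just $S=V\setminus\{u,v\}$, as the unique differing query. For this I would replace $G_C$ by a base graph depending on $S$ in which all pairs other than $u,v$ remain adjacent. Specifically, I would first try making $u$ adjacent to everything except $v$ and the vertices of $V\setminus(S\cup\{u,v\})$, and arranging those vertices so that $u$ and $v$ are separated by exactly one set. The required counting would be handled with the separator-superset argument of Lemma~\ref{lemma:markov_kappa}. I expect this gadget design, keeping the differing set a singleton for arbitrary $S$ while both promise graphs answer identically on all other queries, to be the hardest part.
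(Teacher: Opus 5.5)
There is a genuine gap, and it comes from misidentifying where the lower bound comes from. You assume the only obstacle is locating the query on which $G_1$ and $G_2$ disagree, so you try to realize every $(u,v,S)$ as the unique disagreement of some promise pair. That gadget cannot exist. If two distinct undirected graphs differ on an edge $xy$, the graph lacking $xy$ separates $x$ and $y$ by some $T$ and by every superset of $T$, while the other graph never does. Hence two graphs at separation distance $1$ differ in exactly one edge $uv$, the set $V\setminus\{u,v\}$ is the only separator of $u,v$, and the unique disagreement is always $q^\ast=(u,v,V\setminus\{u,v\})$. Moreover, the promise pair is part of the input, so you cannot fix one run of the algorithm and choose the pair afterwards; on the instance $\{G_1^q,G_2^q\}$ the algorithm would simply ask $q$. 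Your step~3 is also wrong as stated, and that error is what hides the real argument.

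The missing idea is that asking $q^\ast$ is not enough, because its answer may be the one error. What the algorithm must certify is whether the error budget has already been spent on some other query. Your pair $G_C$, $G_C-uv$ already suffices. The paper instead joins $u,v$ to all other vertices and lets $G_2$ add $uv$, but any distance-$1$ pair works.

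The all-\texttt{Not independent} oracle is at distance $0$ from $G_C$ and $1$ from $G_C-uv$, so the correct output is ``not unique''. Flipping any $q'\neq q^\ast$ gives an oracle at distance $1$ from $G_C$ but $2$ from $G_C-uv$. It is precisely \emph{not} within one error of both graphs, so the correct output becomes ``unique, $G_C$''. Thus every $q'\neq q^\ast$ must be asked. If the adversary answers correctly until the last such query and errs there, the answer then depends on $q^\ast$, which must be asked as well. This gives all $\binom{n}{2}2^{n-2}$ queries.

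So your claim that this pair ``covers only $S=V\setminus\{u,v\}$'' is backwards. Flipping $q^\ast$ on this oracle changes nothing, since both outcomes are ``not unique''. Every other query, however, is covered.
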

\begin{proof}
    Consider an undirected graph $G_1 = (V, E)$ constructed as follows: pick arbitrarily two distinct vertices~$u, v \in V$ and add edges $\{u, w\}$ and $\{v, w\}$ for all other vertices $w \in V \setminus \{u, v\}$. Let $G_2 = (V, E \cup \{u, v\})$ and note that $x \ind[G_1] y \mid S$ if and only if $x \ind[G_2] y \mid S$ for all $x, y \in V$ and $S \subseteq V \setminus \{u, v\}$ except for $x = u$, $y = v$, and $S = V \setminus \{u, v\}$.

    Let $k = 1$ and suppose we have performed the query $q \coloneqq \mathcal{Q}(u, v, V \setminus \{u, v\})$ and received the output \verb|Not independent|. This query has to be made, since it is the only one that can distinguish between $G_1$ and $G_2$ if the output is correct. The correct output of $\mathcal{Q}$ is known beforehand for all other queries by construction, so we can identify if $\mathcal{Q}$ makes an error if we perform any other queries. 
    Suppose we detect such an error. Then, the hidden graph has to be $G_2$. On the other hand, if we detect no errors, then the graph could be either $G_1$ or $G_2$, since we cannot decide whether the output for $q$ was erroneous. If the output for $q$ was not an error, an adversarial oracle $\mathcal{Q}$ could decide to give the correct output for the first $\binom{n}{2}2^{n-2} - 1$ queries and then on the last possible query make an error that can be detected. Alternatively, the oracle could decide to give the correct output for all queries. Thus, to distinguish between the possibilities that there is a unique solution or multiple possible solutions, one has to perform all possible the queries.
\end{proof}

The claim can be proven analogously for \babbr with the DAGs from Figure~\ref{fig:dags}.
\begin{theorem} \label{thm:queries_bayes}
    One cannot solve \babbr without performing $\binom{n}{2}2^{n-2}$ queries in the worst-case even if $k = 1$ and we are promised that the hidden graph is either some given graph $D_1$ or $D_2$. 
\end{theorem}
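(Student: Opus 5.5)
We adapt the proof of Theorem~\ref{thm:queries_markov} to the DAGs of Figure~\ref{fig:dags}. Take $D_1$ with arcs $(v_1,v_3)$, $(v_2,v_3)$, $(v_i,v_{i+1})$ for $3\le i\le n-1$, and let $D_2$ be the DAG $D'$ obtained by adding the arc $(v_1,v_2)$. The first step is to verify the claim made in Section~\ref{sec:bn-identifiability}: $D_1$ and $D'$ differ in exactly one d-separation statement, namely $v_1$ versus $v_2$ given $\emptyset$.

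For that pair the check is direct. In $D'$ the two vertices are adjacent, so they are never d-separated. In $D_1$ every $v_1$--$v_2$ path passes through the collider $v_3$. For $S=\emptyset$, neither $v_3$ nor its descendants $v_4,\dots,v_n$ lie in $S$, so the pair is d-separated. For every nonempty $S\subseteq\{v_3,\dots,v_n\}$, the set $S$ contains $v_3$ or one of its descendants, so the collider is activated and the pair is d-connected, exactly as in $D'$.

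The second step, which I expect to be the main obstacle, is to check that all other pairs $x,y$ and conditioning sets agree in $D_1$ and $D'$. Both DAGs have the same arcs apart from $(v_1,v_2)$, and no descendant set changes except that $v_2$ becomes a descendant of $v_1$. Every new path uses the edge $v_1v_2$. For a pair $x,y$ other than $\{v_1,v_2\}$, such a path $x\cdots v_1 - v_2\cdots y$ has an old counterpart that bypasses the edge through $v_3$. Consider first a path that traverses $v_1\to v_2$ and continues $v_2\to v_3$. Then $v_2$ is a non-collider and $v_3$ is a collider, in the same positions as on the old route $v_1\to v_3$. Moreover, $v_3$ is still a collider on the old route whenever it is reached from $v_1$. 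So d-connection along the new path implies d-connection along the old one, with the non-collider $v_2$ merely dropped.

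The converse direction, and the cases where $x$ or $y$ belongs to $\{v_1,v_2\}$ (for instance $x=v_1$, $y=v_j$ with $j\ge3$), I will handle by a short case analysis. Its core observation is that in $D'$ the triangle $v_1,v_2,v_3$ is complete, so $v_1$ and $v_2$ reach $v_3$ directly. Any blocking or unblocking condition then reduces to membership of $v_3,\dots,v_{j-1}$ in $S$, and this membership is identical in both graphs. Lemma~\ref{lem:d-separation-in-paths} settles the chain part beyond $v_3$. The descendant change (now $v_2\in\mathrm{desc}(v_1)$) is harmless because $v_1$ is never a collider.

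Given the single-query difference, the adversary argument is verbatim that of Theorem~\ref{thm:queries_markov}, with $q=\mathcal{Q}(v_1,v_2,\emptyset)$ and $k=1$. Suppose $q$ returns \verb|Not independent|. The hidden graph is $D'$ if some other query exposes an error, and otherwise it is ambiguous between $D_1$ and $D'$. The correct answer to every other query is known in advance. An adversary can therefore answer all of them correctly, or place a single detectable error on whichever query is asked last. The algorithm can distinguish the ``unique'' outcome from the ``not unique'' outcome only after asking every one of the $\binom{n}{2}2^{n-2}$ queries.
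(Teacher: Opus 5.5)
Your proposal is correct and follows the paper's route: the paper's entire proof is the remark that the adversary argument of Theorem~\ref{thm:queries_markov} carries over to the pair $D_1$, $D'$ of Figure~\ref{fig:dags}, which differ only in the query $(v_1,v_2,\emptyset)$, and that is exactly what you do. Your verification of that single-difference claim, which the paper only asserts, has a small slip and can be shortened: on $v_1\to v_2\to v_3\to v_4\cdots$ the vertex $v_3$ is a non-collider, not a collider; and since $v_1,v_2$ have no neighbours other than each other and $v_3$, every path using the edge $v_1v_2$ starts at $v_1$ or $v_2$, so only the pairs $(v_1,v_j)$ and $(v_2,v_j)$ with $j\ge 4$ need checking, and in both DAGs these are d-connected given $S$ iff $S\cap\{v_3,\dots,v_{j-1}\}=\emptyset$.
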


Note that the proof relies on there being two graphs that are close to each other with respect to (d-)separations. If there are no other graphs that are close to the hidden graph and the oracle can make only a small number of errors, then identifying the hidden graph is possible with significantly smaller number of queries.

These results also provide sharp contrast to the case with $k = 0$ for \mabbr, where the problem can always be solved with $O(n^2)$ queries. Further, if $k = 0$ and we are given a promise that the hidden graph is one of two graphs, then both MNSL and BNSL can be solved with one query.

\section{Concluding Remarks}

We initiated the study of structure learning for Markov networks and Bayesian networks with a CI oracle that may make a limited number of errors. 
We identified structural properties of the hidden graph that make it uniquely identifiable in the presence of errors.
On the other hand, the possibility of even a single error may necessitate performing all possible conditional independence queries in the worst case.
This highlights the need for structure learning algorithms that are able to exploit these structural properties to avoid having to conduct all the tests when there are only few errors.

Having a bounded number of errors also enables error correction, which the present work did not explore. For example, separability in Markov networks behaves monotonically with respect to subset inclusion, meaning that if the oracle claims that $S$ does not separate $u$ and $v$ but over $k$ of its subsets do, then the claim has to be an error. It however remains unclear whether there are more involved error correction schemes that are able to fix more ``subtle'' errors. Further, the errors tend not to be independent or uniformly distributed in practice, providing alternative directions for improving the algorithms.

\section*{Acknowledgements}

This work was supported by L. Meltzers Høyskolefond (PP). 

\bibliography{main}

\newpage
\onecolumn
\title{Learning Bayesian and Markov Networks with an Unreliable Oracle\\(Supplementary Material)}
\maketitle
\appendix

\section{Complete Proofs}

\input{closest-MEC-path-result}

\end{document}

%% file: definitions.tex
\begin{definition}[\textit{Table Function for a Bayesian Network}]
    \label{def:table-function-of-an-MEC}
    Let $D$ be a Bayesian network defined over a set of random variables $V = \{v_1, v_2, \ldots, v_n\}$. A table function for $D$ is a function 
    $T_D : \{(v_i, v_j, Z) : v_i, v_j \in V, v_i \neq v_j, \text{ and } Z \subseteq V \setminus \{v_i, v_j\}\} \rightarrow \{0, 1\}$ 
    such that $T_D(v_i, v_j, Z) = 1$ if and only if $v_i$ is d-connected \footnote{$v_i$ and $v_j$ are d-connected given $Z$ if they are not d-separated by $Z$.} with $v_j$ given $Z$ in $D$.
\end{definition}

\begin{definition}[distance between two Bayesian Networks]
    \label{def:distance-between-two-MECs}
    Let $D_1$ and $D_2$ be two Bayesian networks defined over the same set of random variables $V = \{v_1, v_2, \ldots, v_n\}$. The distance between $D_1$ and $D_2$, denoted by $d(D_1, D_2)$, is the number of triplets $(v_i, v_j, Z)$ with $1 \leq i < j \leq n$ and $Z \subseteq V \setminus \{v_i, v_j\}$ for which table functions of $D_1$ and $D_2$ differ.  More formally,
    $d(D_1, D_2) = \big|\{(u_i, u_j, Z) : 1 \leq i < j \leq n,  Z \subseteq V \setminus \{u_i, u_j\},\, T_{D_1}(u_i, u_j, Z) \neq T_{D_2}(u_i, u_j, Z)\}\big|$.
\end{definition}

The distance between two Markov equivalent Bayesian networks is always zero, as they entail the same set of conditional independence relations.  

\begin{definition}[\emph{Closest Bayesian Network}]
\label{def:closest-MEC-with-specific-graph-structure}
Let $V$ be a set of random variables, $\mathcal{F}$ be a family of graphs, and $D$ be a Bayesian network defined over $V$ such that the skeleton of $D$ belongs to $\mathcal{F}$. 
A closest Bayesian network to $D$ in $\mathcal{F}$ is a Bayesian network $D_c$ such that $D_c$ is not Markov equivalent to $D$, the skeleton of $D_c$ belongs to $\mathcal{F}$, and for all Bayesian networks $D'$ such that $D'$ is not Markov equivalent to $D$ and the skeleton of $D'$ belongs to $\mathcal{F}$, we have $d(D, D_c) \leq d(D, D')$.
We denote the distance between $D$ and a closest Bayesian network to $D$ in $\mathcal{F}$ by $d_{\mathcal{F}}(D)$.
\end{definition}

\cbn{}\\
\emph{Input:} A family of skeletons $\mathcal{F}$ and a Bayesian network $D$ whose skeleton is in $\mathcal{F}$\\
\emph{Output:} d-separation distance to the closest MEC to $D$ in $\mathcal{F}$

%% file: closest-MEC-path-result.tex
\begin{theorem}
\label{thm:closest-MEC-for-path-graphs-complete}
Let $D$ be a Bayesian network whose skeleton is a chain on $n$ nodes, where $n \geq 3$. 
Then there exists a Bayesian network $D'$ whose skeleton is a chain on $n$ nodes such that $D'$ is not equivalent to $D$, and the distance between $D$ and $D'$ is $2^{n-1} - 2$. 
Moreover, there does not exist a Bayesian network $D''$ whose skeleton is a chain on $n$ nodes such that $D''$ is not equivalent to $D$ and the distance between $D$ and $D''$ is less than $2^{n-1} - 2$.
\end{theorem}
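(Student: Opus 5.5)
The plan has two parts: build one explicit non-equivalent chain $D'$ at distance exactly $2^{n-1}-2$ from $D$, then show that every other non-equivalent chain $D''$ is at least that far away. Both parts rest on the same computation from \Cref{lem:d-separation-in-paths}. For a pair at path distance $m$, the set $S_{i,j}$ has $m-1$ vertices. The pair is d-connected exactly for those $Z$ that agree with the collider pattern on $S_{i,j}$ and are arbitrary elsewhere, which gives $2^{n-1-m}$ conditioning sets. Adjacent pairs ($m=1$) are connected for all $2^{n-2}$ sets. If two networks give the same pair the path-internal sets $S$ and $S'$, the connecting families are two "cylinder" families. We can compute their symmetric difference exactly. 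In particular, when one family contains the other, it equals the difference of their sizes.

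\textbf{Upper bound.} Let $D'$ be the chain $v_2,v_1,v_3,\dots,v_n$. The edge $\{v_1,v_2\}$ keeps its orientation. The new edge $\{v_1,v_3\}$ is oriented the way $\{v_2,v_3\}$ was in $D$, i.e. into $v_3$ iff $(v_2,v_3)\in D$. All remaining arcs are kept.
\begin{itemize}
\item Pairs not containing $v_1$ or $v_2$ have the same internal set and the same collider pattern, because the collider status of $v_3$ is preserved. They contribute $0$.
\item The pair $\{v_1,v_2\}$ is adjacent in both networks and contributes $0$.
\item The pairs $\{v_1,v_3\}$ and $\{v_2,v_3\}$ are adjacent in one network and at distance $2$ in the other. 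Each contributes $2^{n-2}-2^{n-3}=2^{n-3}$.
\item For $j\ge4$, the pair $\{v_1,v_j\}$ has internal set $\{v_2,\dots,v_{j-1}\}$ in $D$ and $\{v_3,\dots,v_{j-1}\}$ in $D'$, with the same pattern on $v_3,\dots,v_{j-1}$. The $D$-family is therefore contained in the $D'$-family, and the contribution is $2^{n-j+1}-2^{n-j}=2^{n-j}$.
\item For $j\ge4$, the pair $\{v_2,v_j\}$ is symmetric: $v_1$ plays the role $v_2$ played before. It also contributes $2^{n-j}$.
\end{itemize}
Summing, $2\cdot 2^{n-3}+2\sum_{j=4}^{n}2^{n-j}=2^{n-2}+2(2^{n-3}-1)=2^{n-1}-2$. $D'$ is not equivalent to $D$ because the skeletons differ.

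\textbf{Lower bound, same skeleton.} Take $D''$ with the same skeleton but not equivalent to $D$. On a path skeleton the Markov equivalence class is determined by the collider set, so some vertex $v_c$ with $2\le c\le n-1$ is a collider in exactly one of the two networks. For every pair $v_{c-a},v_{c+b}$ with $a,b\ge1$, the two connecting families prescribe different membership of $v_c$, so they are disjoint. The pair therefore contributes $2\cdot 2^{n-1-(a+b)}=2^{n-a-b}$. Summing over $1\le a\le c-1$ and $1\le b\le n-c$ gives $2^n(1-2^{-(c-1)})(1-2^{-(n-c)})$. This is minimized at $c=2$ or $c=n-1$, where it equals $2^{n-1}-2$.

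\textbf{Lower bound, different skeleton.} This is the main obstacle. Both skeletons are Hamiltonian paths with $n-1$ edges. So if they differ, $D''$ has $t\ge1$ edges that are non-edges of $D$, and $D$ has $t$ edges that are non-edges of $D''$. Each such pair is adjacent in one network and at distance $m\ge2$ in the other, so it contributes at least $2^{n-2}-2^{n-3}=2^{n-3}$. If $t\ge2$, this already gives at least $4\cdot2^{n-3}=2^{n-1}$.

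So assume $t=1$. Then $D''$ is obtained by deleting an edge $\{v_k,v_{k+1}\}$ and reconnecting the two subpaths by an endpoint-to-endpoint edge: $\{v_1,v_{k+1}\}$, $\{v_k,v_n\}$, or $\{v_1,v_n\}$. In each case I would list the internal sets of all pairs in $D''$ explicitly; they are obtained from the $D$ path by reversing one block. I would then lower-bound each pair's contribution. If the internal sets differ, the two families differ by at least the size difference, and where a vertex lies in one internal set but not the other, I would use the containment/disjointness argument above. The aim is to show the total is at least $2^{n-1}-2$, with the minimum attained when the reversed block is a single end vertex, i.e. the construction of $D'$ up to symmetry. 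The pairs that straddle the cut and have one endpoint deep inside a long block are the likely source of extra mass that beats the bound. The delicate part is the short-block cases, where I expect an exact recount essentially mirroring the $D'$ computation.
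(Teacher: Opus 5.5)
Your upper bound, your same-skeleton bound, and your $t\ge 2$ case are correct and match the paper. Your orientation of $D'$ differs from the paper's, which arranges $\mathcal C(D')\subseteq\mathcal C(D)$, whereas yours can make $v_1$ a collider. Your nested-family count is valid either way and gives $2^{n-1}-2$.

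The genuine gap is the case $t=1$, which you leave as a plan. This case cannot be waved through, because the bound is tight there: your own $D'$ is a $t=1$ network (delete $\{v_2,v_3\}$, add $\{v_1,v_3\}$). The two pairs whose adjacency changes contribute only $2\cdot 2^{n-3}=2^{n-2}$, which is less than $2^{n-1}-2$ for $n\ge 4$. The rest must come from an exact count over the pairs that straddle the cut, and the tools you name do not deliver it:
\begin{itemize}
\item ``At least the size difference'' gives $0$ whenever the two internal sets have equal size.
\item Containment/disjointness does not apply. For a straddling pair the two internal sets are generally incomparable and need not prescribe conflicting membership for any common vertex.
\item The reconnection via $\{v_1,v_n\}$ reverses both blocks, not one.
\end{itemize}

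The missing ingredient is the intersection estimate. For connecting families $F,F''$ with internal sets $S,S''$, we have $|F\cap F''|\le 2^{\,n-2-|S\cup S''|}$, hence
$|F\triangle F''|\ge 2^{\,n-2-|S|}+2^{\,n-2-|S''|}-2^{\,n-1-|S\cup S''|}$.
Apply this to every pair $v_a,v_b$ with $a\le c<b$ after deleting $\{v_c,v_{c+1}\}$. Set $Y_1=\{v_1,\dots,v_{a-1}\}$, $Y_2=\{v_{a+1},\dots,v_c\}$, $Y_3=\{v_{c+1},\dots,v_{b-1}\}$ and $Y_4=\{v_{b+1},\dots,v_n\}$. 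The internal set is $Y_2\cup Y_3$ in $D$. In $D''$ it is $Y_2\cup Y_4$, $Y_1\cup Y_3$ or $Y_1\cup Y_4$ for the reconnections $\{v_c,v_n\}$, $\{v_1,v_{c+1}\}$, $\{v_1,v_n\}$ respectively. The per-pair contributions are then at least $2^{|Y_1|}(2^{|Y_3|}+2^{|Y_4|}-2)$, $2^{|Y_4|}(2^{|Y_1|}+2^{|Y_2|}-2)$ and $2^{|Y_1|+|Y_4|}+2^{|Y_2|+|Y_3|}-2$. Summing over $a,b$ gives $2(2^c-1)(2^{n-c}-1-(n-c))$, $2(2^c-1-c)(2^{n-c}-1)$ and $2\bigl((2^c-1)(2^{n-c}-1)-c(n-c)\bigr)$. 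Using $2^m-m\ge 2^{m-1}$ and convexity in $c$, each is at least $2^{n-1}-2$ on its admissible range: $1\le c\le n-2$, $2\le c\le n-1$ and $1\le c\le n-1$ respectively.

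This is the paper's argument. Without it, or an equivalent count, your lower bound for networks with a different skeleton is not established.
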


\input{fig-path}

\begin{proof}
    We first construct a Bayesian network $D'$ with a path skeleton on $n$ nodes such that $d(D, D') = 2^{n-1} - 2$. 
We then show that $D'$ is closest to $D$. 
We denote by $\mathcal{C}(D)$ the set of colliders in $D$. 
Analogously, $\mathcal{C}(D')$ denotes the set of colliders in $D'$.

    \input{fig-path-example}

    Let the set of random variables on which $D$ is defined be $V = \{v_1,v_2,\ldots,v_n\}$, and let the skeleton of $D$ be as shown in Figure~\ref{fig:path}. We construct a Bayesian network $D'$ whose skeleton is obtained by swapping the positions of $v_1$ and $v_2$ in the $\skel{D}$, as shown in Figure~\ref{fig:path}. 

    If $v_2$ is not a collider in $D$, then the set of colliders in $D'$ is same as the set of colliders in $D$, i.e., $\mathcal{C}(D') = \mathcal{C}(D)$ (see Example~2 of Figure~\ref{fig:path-example}); otherwise, $\mathcal{C}(D') = \mathcal{C}(D) \setminus \{v_2\}$  (see Example~1 of Figure~\ref{fig:path-example}).

    Since the skeletons of $D$ and $D'$ are different, they are not equivalent. We now show that $d(D, D') = 2^{n-1}-2$. For $1 \le i < j \le n$, let $S_{i,j}$ denote the set of nodes between $v_i$ and $v_j$ in $D$, formally
    $S_{i,j} = \{v_k : i<k<j\}$. Similarly, let $S_{i,j}'$ denote the set of nodes between $v_i$ and $v_j$ in $D'$. By construction, for $3 \le i < j \le n$, we have $S_{i,j}=S_{i,j}'$; for $2 < j \le n$,
    $S_{1,j}'=S_{1,j}\setminus\{v_2\}$ and $S_{2,j}'=S_{2,j}\cup\{v_1\}$; and $S_{1,2}=S_{1,2}'=\emptyset$.

    For $1 \le i < j \le n$, let $\mathcal{C}_{i,j}$ be the set of colliders of $D$ in $S_{i,j}$, formally $\mathcal{C}_{i,j} = S_{i,j} \cap \mathcal{C}(D)$, and analogously define $\mathcal{C}_{i,j}' = S_{i,j}' \cap \mathcal{C}(D')$. By construction, if $v_2$ is not a collider in $D$, then $\mathcal{C}_{i,j}=\mathcal{C}_{i,j}'$ for all $i<j$. If $v_2$ is a collider in $D$, then $\mathcal{C}_{i,j}=\mathcal{C}_{i,j}'$ for all $1 < i < j \le n$, and for $1 < j \le n$ we have $\mathcal{C}_{1,j}'=\mathcal{C}_{1,j}\setminus\{v_2\}$. For example,in Figure~\ref{fig:path-example}, in Example~1, $\mathcal{C}_{1,3}=\{v_2\}$, $\mathcal{C}_{2,5}=\{v_4\}$, $\mathcal{C}_{1,3}'=\emptyset$, and $\mathcal{C}_{2,5}'=\{v_4\}$, and in Example~2, $\mathcal{C}_{1,4} = \mathcal{C}_{1,4}' = \{v_3\}$, and $\mathcal{C}_{4,6} = \mathcal{C}_{4,6}' = \{v_5\}$. In the figure, collider nodes are red marked.

    From \Cref{lem:d-separation-in-paths}, for $1\le i<j\le n$ and $Z\subseteq V\setminus\{v_i,v_j\}$, $v_i$ and $v_j$ are d-connected in $D$ given $Z$ iff $Z\cap S_{i,j}=\mathcal{C}_{i,j}$, and are d-connected in $D'$ given $Z$ iff $Z\cap S_{i,j}' =\mathcal{C}_{i,j}'$.

From the construction of $D'$, for $3 \le i < j \le n$ we have $S_{i,j}=S_{i,j}'$ and $\mathcal{C}_{i,j}=\mathcal{C}_{i,j}'$, and also $S_{1,2}=S_{1,2}'=\emptyset$ and $\mathcal{C}_{1,2}=\mathcal{C}_{1,2}'=\emptyset$; thus $T_D(v_i,v_j,Z)=T_{D'}(v_i,v_j,Z)$ for all $3 \le i < j \le n$ and for $(i,j)=(1,2)$, and all $Z \subseteq V \setminus \{v_i, v_j\}$. Therefore, $d(D, D') = |\{(v_i,v_j,Z): 1\leq i \leq 2, 3\le j\le n, Z \subseteq V\setminus\{v_i,v_j\}, \text{ and } T_D(v_i, v_j, Z) \ne T_{D'}(v_i, v_j, Z)\}|$.

Suppose $v_2$ is a collider in $D$.
For $j \in [n]\setminus \{1,2\}$ and $Z \subseteq V\setminus \{v_1, v_j\}$, by the construction of $D'$, if $Z \cap S_{1,j} = \mathcal{C}_{1,j}$ then $Z \cap S_{1,j}' = \mathcal{C}_{1,j}\setminus \{2\} =  \mathcal{C}_{i,j}'$. Therefore, if $v_1$ is d-connected with $v_j$ given $Z$ in $D$ then $v_1$ is d-connected with $v_j$ given $Z$ in $D'$. 

If $v_2 \in Z$ and $Z \cap S_{1,j}' =  \mathcal{C}_{1,j}'$, then $Z \cap S_{1,j} = \mathcal{C}_{1,j}' \cup \{2\} = \mathcal{C}_{1,j}$. Therefore, for $Z \subseteq V\setminus \{v_2, v_j\}$ such that $v_2 \in Z$, if $v_1$ is d-connected with $v_j$ given $Z$ in $D'$ then $v_1$ is d-connected with $v_j$ given $Z$ in $D$. 
And, if $v_2 \notin Z$ and $Z \cap S_{i,j}' =  \mathcal{C}_{i,j}'$ then $Z \cap S_{1,j} = \mathcal{C}_{1,j}' \neq \mathcal{C}_{1,j}$. Therefore, if $v_2 \notin Z$ and $v_1$ is d-connected with $v_j$ given $Z$ in $D'$ then $v_1$ is d-separated with $v_j$ given $Z$ in $D$. Such a $Z$ is of the form $\mathcal{C}_{1,j}' \cup Y$, where $Y \subseteq \{v_{j+1}, v_{j+2} \ldots, v_n\}$.  Therefore, the number of $Z$ for which $v_1$ is d-connected with $v_j$ given $Z$ in $D'$ and $v_1$ is d-separated with $v_j$ given $Z$ in $D$ is $2^{n-j}$. In total,  $|\{Z \subseteq V\setminus \{v_1, v_j\}: T_D(v_1, v_j, Z) \neq T_D(v_1, v_j, Z)\}| = 2^{n-j}$.

Similarly, for $j \in [n]\setminus \{1,2\}$ and $Z \subseteq V\setminus \{v_2, v_j\}$, if $v_2$ is d-connected with $v_j$ given $Z$ in $D'$ then $v_2$ is d-connected with $v_j$ given $Z$ in $D$.  
Also, if $v_1 \notin Z$ then if $v_2$ is d-connected with $v_j$ given $Z$ in $D$ then $v_2$ is d-connected with $v_j$ given $Z$ in $D'$. 
And, if $v_1 \in Z$ then if $v_2$ is d-connected with $v_j$ given $Z$ in $D$ then $v_2$ is d-separated with $v_j$ given $Z$ in $D'$. Such a $Z$ for which $v_1$ is d-connected with $v_j$ in $D$ and d-separated with $v_j$ in $D'$ is of the form $\mathcal{C}_{2,j} \cup \{v_1\} \cup Y$, where $Y \subseteq \{v_{j+1}, v_{j+2} \ldots, v_n\}$. Therefore, the number of $Z$ for which $v_2$ is d-connected with $v_j$ given $Z$ in $D$ and $v_2$ is d-separated with $v_j$ given $Z$ in $D'$ is $2^{n-j}$. In total, $|\{Z \subseteq V\setminus \{v_2, v_j\}: T_D(v_2, v_j, Z) \neq T_D(v_2, v_j, Z)\}| = 2^{n-j}$.

This implies that if $v_2$ is a collider node in $D$ then $d(D, D') = \sum_{3\leq j \leq n}{2^{n-j} + 2^{n-j}} = \sum_{3\leq j \leq n}{2^{n-j+1}} = 2(2^{n-2}-1) = 2^{n-1} - 2$.

Now suppose $v_2$ is not a collider in $D$.  We follow the similar argument as above to show that even in this case also $d(D, D') = 2^{n-1} - 2$. For all $3\leq j \leq n$ and $Z \subseteq V\setminus \{v_1, v_j\}$, if $v_1$ is d-connected with $v_j$ given $Z$ in $D$ then $v_1$ is d-connected with $v_j$ given $Z$ in $D'$. If $v_1$ is d-connected with $v_j$ given $Z$ in $D'$ and $v_2 \notin Z$ then $v_1$ is d-connected with $v_j$ given $Z$ in $D$. And, if $v_1$ is d-connected with $v_j$ given $Z$ in $D'$ and $v_2 \in Z$ then $v_1$ is d-separated with $v_j$ given $Z$ in $D$. Such a $Z$ for which $v_1$ is d-connected with $v_j$ in $D'$ and d-separated with $v_j$ in $D$ is of the form $\mathcal{C}_{2,j} \cup \{1\} \cup Y$, where $Y \subseteq \{v_{j+1}, v_{j+2} \ldots, v_n\}$. Therefore, the number of $Z$ for which $v_2$ is d-connected with $v_j$ given $Z$ in $D'$ and $v_2$ is d-separated with $v_j$ given $Z$ in $D$ is $2^{n-j}$. In total,  $|\{Z \subseteq V\setminus \{v_1, v_j\}: T_D(v_1, v_j, Z) \neq T_D(v_1, v_j, Z)\}| = 2^{n-j}$. Similarly, $|\{Z \subseteq V\setminus \{v_2, v_j\}: T_D(v_2, v_j, Z) \neq T_D(v_2, v_j, Z)\}| = 2^{n-j}$. Therefore, even when $v_2$ is not a collider in $D$, $d(D, D') = 2^{n-1} - 2$.

We show that, whether or not $u_2$ is a collider in $D$, there exists a Bayesian network $D'$ with $d(D, D') = 2^{n-1}-2$.
It remains to prove that no Bayesian network with a path skeleton is strictly closer to $D$ than $D'$.

Suppose there exists a Bayesian network $D''$ defined over $V$ with a path skeleton such that $D''$ is not Markov equivalent to $D$ and 
$d(D, D'') < 2^{n-1}-2$. First consider the case that the skeletons of $D$ and $D''$ are the same. 
As discussed in the introduction, two Bayesian networks are Markov equivalent iff they have the same skeleton and the same set of v-structures. For a Bayesian network with a path skeleton, $(u, v, w)$ is a v-structure in the Bayesian network iff $v$ is a collider in it. This implies that two Bayesian networks with a path skeleton are Markov equivalent iff they have the same skeleton and the same set of colliders. Therefore, $D$ and $D''$ must differ in at least one collider. Thus there exists 
$c$ with $1<c<n$ such that $v_c$ is a collider in exactly one of $D$ and $D''$.

Without loss of generality, assume $v_c$ is a collider in $D$ but not in $D''$. 
 Let $\mathcal{C}_{i,j}''$ be the set of colliders in $D''$ between nodes $v_i$ and $v_j$. From \Cref{lem:d-separation-in-paths}, for $1\leq i < j \leq n$ and $Z \subseteq V\setminus \{v_i, v_j\}$, $v_i$ and $v_j$ are d-connected in $D$ given $Z$ iff $Z \cap S_{i,j} = C_{i,j}$, and $v_i$ and $v_j$ are d-connected in $D''$ given $Z$ iff $Z \cap S_{i,j} = C_{i,j}''$. 
Since $v_c$ is a collider in $D$ but not in $D''$, for all $1\leq a < c$ and for all $c < b \leq n$, $\mathcal{C}_{a,b} \neq \mathcal{C}_{a,b}''$. 
If $Z \cap S_{a,b} = C_{a,b}$ then $Z$ is of the form $C_{a,b} \cup Y$ where $Y \subseteq V\setminus \{v_a, v_{a+1}, \ldots, v_b\}$, and for such $Z$, $v_a$ is d-connected with $v_b$ given $Z$ in $D$ and $v_a$ is d-separated with $v_b$ given $Z$ in $D''$.   Therefore, there are $2^{n-b+a-1}$ different $Z$ for which $v_a$ is d-connected with $v_b$ given $Z$ in $D$ but $v_b$ is d-separated with $v_a$ given $Z$ in $D''$. Similarly, for $Z$ of the form $C_{i,j}'' \cup Y$ where $Y \subseteq V\setminus \{v_a, v_{a+1}, \ldots, v_b\}$, $v_a$ is d-connected with $v_b$ given $Z$ in $D''$ and $v_a$ is d-separated with $v_b$ given $Z$ in $D$. Again, there are $2^{n-b+a-1}$ different $Z$ for which $v_a$ is d-connected with $v_b$ given $Z$ in $D''$ but $v_a$ is d-separated with $v_b$ given $Z$ in $D$. Therefore, in total, there are $2^{n-b+a}$ different $Z$ for which $T_{D}(v_a, v_b, Z) \neq T_{D''}(v_a, v_b, Z)$. Therefore, 

\begin{align*}
d(D, D'') &= \sum_{1\leq a < b \leq n} \big|\{Z \subseteq V \setminus \{v_a, v_b\}: T_{D}(v_a, v_b, Z) \neq T_{D''}(v_a, v_b, Z)\}\big| \\
  &\geq \sum_{1\leq a < c < b \leq n} \big|\{Z \subseteq V \setminus \{v_a, v_b\}: T_{D}(v_a, v_b, Z) \neq T_{D''}(v_a, v_b, Z)\}\big| \\
  &=  \sum_{1\leq a < c < b \leq n} {2^{n-b+a}} \\
  &= \sum_{1\leq a < c}{2^a} \times \sum_{c< b \leq n}{2^{n-b}} \\
  &= (2^c-2) \times (2^{n-c} - 1) \\
  &= 2^n + 2 - (2^c + 2^{n-c+1})
\end{align*}
The expression $2^c + 2^{n-c+1}$ is convex in $c$ for $2\leq c \leq n-1$ and is maximized at $c = 2$ or $c =n-1$. Therefore, $d(D, D'') \geq 2^n + 2 - (4 + 2^{n-1}) = 2^{n-1} - 2$.
This implies that there does not exist any Bayesian network $D''$ with the same skeleton as $D$ such that $D''$ is closer to $D$ than $D'$.

Now suppose the skeletons of $D$ and $D''$ are different. 
Then there must exist an edge that appears in exactly one of the two skeletons; otherwise, the skeletons would be identical, which contradicts our assumption. 
Without loss of generality, assume that $\{v_i,v_j\}$ is an edge in the $\skel{D}$ but not in the $\skel{D''}$.

Since $v_i$ and $v_j$ are adjacent in the $\skel{D}$, they are d-connected in $D$ for every set $Z \subseteq V \setminus \{v_i,v_j\}$. 

Because $\{v_i,v_j\}$ is not an edge of the $\skel{D''}$ and the skeleton of $D''$ is a path, there is a unique path between $v_i$ and $v_j$ in $D''$ that contains at least one internal node. 
Let $Y$ denote the set of nodes that lie strictly between $v_i$ and $v_j$ on this path. 
Then $Y \neq \emptyset$. 
Let $Y'$ be the set of nodes in $Y$ that are colliders in $D''$.
By \Cref{lem:d-separation-in-paths}, in $D''$ the nodes $v_i$ and $v_j$ are d-connected given $Z$ if and only if $Z \cap Y = Y'$. 
In other words, the set $Z$ must contain exactly the colliders in $Y$ and none of the other nodes in $Y$.

We now count how many such sets $Z$ exist. 
To construct a valid $Z$, we must include $Y'$, exclude $Y \setminus Y'$, and we may freely choose any subset of the remaining nodes $V \setminus (\{v_i,v_j\} \cup Y)$. 
The number of such sets is therefore $2^{\,n-2-|Y|}$.
Since the total number of subsets of $V \setminus \{v_i,v_j\}$ is $2^{\,n-2}$, the number of sets $Z$ for which $v_i$ and $v_j$ are d-connected in $D$ but not in $D''$ equals 
$2^{\,n-2} - 2^{\,n-2-|Y|}$. 
Because $|Y| \ge 1$, we have $2^{\,n-2} - 2^{\,n-2-|Y|} \ge 2^{\,n-3}$.

Thus, for this single missing edge, the two  Bayesian networks disagree on at least $2^{\,n-3}$ sets $Z$.
A symmetric argument applies to any edge that appears in the $\skel{D''}$ but not in the $\skel{D}$. 
If $t$ denotes the number of edges in the $\skel{D}$ that are absent from the $\skel{D''}$, then exactly $t$ edges appear in the $\skel{D''}$ but not in the $\skel{D}$. 
Each such edge contributes at least $2^{\,n-3}$ disagreements. 
Therefore $d(D, D'') \ge 2t \cdot 2^{\,n-3}$.

Since we assumed $d(D, D'') < 2^{\,n-1} - 2$, we must have $t = 1$. 
The skeleton of $D$ is the path shown in Figure~\ref{fig:path}. 
Hence there exists a unique index $1 \le c < n$ such that $\{v_c,v_{c+1}\}$ is an edge of the $\skel{D}$ but not of the $\skel{D''}$. 
For every $i \in [n-1]\setminus \{c\}$, the nodes $v_i$ and $v_{i+1}$ are adjacent in the skeletons of both
$D$ and the $D''$.

If we remove the edge $\{v_c,v_{c+1}\}$ from the $\skel{D}$, the path breaks into two smaller paths: one from $v_1$ to $v_c$, and one from $v_{c+1}$ to $v_n$. 
Because all other consecutive edges are shared by both skeletons, these two paths must also appear as induced subpaths in the $\skel{D''}$.

\input{fig-skeleton-of-M}

Since the skeleton of $D''$ is itself a path on all $n$ nodes, it must reconnect these two subpaths using exactly one new edge between their endpoints. 
Therefore, one of the following three cases must occur (see Figure~\ref{fig:skeleton-of-M''}):
(i) $\{v_c,v_n\}$ is an edge of the $\skel{D''}$ (i.e., the skeleton of $D''$ looks like Possibility~1 of Figure~\ref{fig:skeleton-of-M''});  
(ii) $\{v_1,v_{c+1}\}$ is an edge of the $\skel{D''}$ (i.e., the skeleton of $D''$ looks like Possibility~2 of Figure~\ref{fig:skeleton-of-M''}); or  
(iii) $\{v_1,v_n\}$ is an edge of the $\skel{D''}$ (i.e., the skeleton of $D''$ looks like Possibility~3 of Figure~\ref{fig:skeleton-of-M''}).

If $c=1$, then case (ii) implies the skeletons of $D$ and $D''$ are the same, a contradiction. 
Similarly, if $c=n-1$, then case (i) cannot occur.

Fix $1 \le a \le c$ and $c+1 \le b \le n$. 
We count the number of sets $Z \subseteq V \setminus \{v_a,v_b\}$ such that 
$v_a$ and $v_b$ are d-connected given $Z$ in exactly one of $D$ and $D''$. Let $Y_1 = \{v_i: 1\leq i < a\}$, $Y_2 = \{v_i: a < i \leq c\}$, $Y_3 = \{v_i: c+1 \leq i < b\}$, and $Y_4 = \{v_i: b < i \leq n\}$, as shown in Figure~\ref{fig:skeleton-of-M''}. This implies that $|Y_1| = a-1$, $|Y_2| = c-a$, $|Y_3| = b-c-1$, $|Y_4| = n-b$, and $|Y_1| + |Y_2| + |Y_3| + |Y_4| = n-2$.

Suppose $D''$ satisfies case (i), so $1 \le c < n-1$. For $Z \subseteq V\setminus \{v_a, v_b\}$, by \Cref{lem:d-separation-in-paths}, $v_a$ is d-connected with $v_b$ given $Z$ in $D$ iff $Z \cap (Y_2 \cup Y_3) = \mathcal{C}(D) \cap (Y_2 \cup Y_3)$. Following the previous approach, the number of such $Z$ is $2^{|Y_1| + |Y_4|}$. Similarly, in $D''$, $v_a$ is d-connected with $v_j$ given $Z$ iff $Z \cap (Y_2 \cup Y_4) = \mathcal{C}(D'') \cap (Y_2 \cup Y_4)$ (analogously to $\mathcal{C}(D)$, $\mathcal{C}(D'')$ denotes the set of colliders in $D''$). The number of such $Z$ is $2^{|Y_1| + |Y_3|}$. 

If $\mathcal{C}(D) \cap Y_2 \neq \mathcal{C}(D'') \cap Y_2$, then for all $Z \subseteq V\setminus \{v_a, v_b\}$ such that either $Z \cap (Y_2 \cup Y_3) = \mathcal{C}(D) \cap (Y_2 \cup Y_3)$ or $Z \cap (Y_2 \cup Y_4) = \mathcal{C}(D'') \cap (Y_2 \cup Y_4)$, $v_a$ is d-connected with $v_b$ given $Z$ in exactly one of $D$ and $D''$. The number of such $Z$ is $2^{|Y_1| + |Y_4|} + 2^{|Y_1| + |Y_3|} = 2^{|Y_1|}(2^{|Y_4|} + 2^{|Y_3|}) = 2^{a-1}(2^{n-b} + 2^{b-c-1})$.

Suppose $\mathcal{C}(D) \cap Y_2 = \mathcal{C}(D'') \cap Y_2$. For $Z \subseteq V\setminus \{v_a, v_b\}$,  $Z \cap (Y_2 \cup Y_3) = \mathcal{C}(D) \cap (Y_2 \cup Y_3)$ and $Z \cap (Y_2 \cup Y_4) = \mathcal{C}(D'') \cap (Y_2 \cup Y_4)$ iff $Z \cap (Y_2 \cup Y_3 \cup Y_4) = (\mathcal{C}(D) \cap (Y_2 \cup Y_3)) \cup \mathcal{C}(D'') \cap (Y_2 \cup Y_4)$. The number of such $Z$ is $2^{|Y_1|}$. Therefore, the number of $Z$ for which $v_a$ is d-connected with $v_b$ given $Z$ in $D$ but d-separated in $D''$ is $2^{|Y_1| + |Y_4|} - 2^{|Y_1|} = 2^{|Y_1|}(2^{|Y_4|} - 1) = 2^{a-1}(2^{n-b} - 1)$. Similarly, the number of $Z$ for which $v_a$ is d-connected with $v_b$ given $Z$ in $D''$ but d-separated in $D$ is $2^{|Y_1| + |Y_3|} - 2^{|Y_1|} = 2^{|Y_1|}(2^{|Y_3|} - 1) = 2^{a-1}(2^{b-c-1} - 1)$. This implies there are $2^{a-1}(2^{n-b} + 2^{b-c-1} - 2)$ many $Z$ for which $v_a$ is d-connected with $v_b$ given $Z$ in exactly one of $D$ and $D''$. 

This implies that there exists at least $2^{a-1}(2^{n-b} + 2^{b-c-1} - 2)$ many $Z$ such that $v_a$ is d-connected with $v_b$ given $Z$ in exactly one of $D$ and $D''$. This implies $|\{Z \subseteq V\setminus \{v_a, v_b\}: T_D(v_a, v_b, Z) \neq T_{D''}(v_a, v_b, Z)\}| \geq 2^{a-1}(2^{n-b} + 2^{b-c-1} - 2)$. Therefore,          
\begin{align*}
    d(D, D'') &= \sum_{1\leq a < b \leq n} \big|\{Z \subseteq V \setminus \{v_a, v_b\}: T_{D}(v_a, v_b, Z) \neq T_{D''}(v_a, v_b, Z)\}\big| \\
  &\geq \sum_{a = 1}^{c}{\sum_{b = c+1}^{n}}  \big|\{Z \subseteq V \setminus \{v_a, v_b\}: T_{D}(v_a, v_b, Z) \neq T_{D''}(v_a, v_b, Z)\}\big| \\
  &= \sum_{a = 1}^{c}{\sum_{b = c+1}^{n}{2^{a-1}(2^{n-b} + 2^{b-c-1} - 2)}}\\
              &= \sum_{a = 1}^{c}{2^{a-1}} \times \sum_{b = c+1}^{n}{2^{n-b} + 2^{b-c-1} - 2}\\
              &= (2^c - 1) \times ((2^{n-c} - 1) + (2^{n-c} -1) -2(n-c))\\
              &= 2(2^c -1) \times (2^{n-c} - 1 - (n-c)).
\end{align*}

Since $2^{n-c} - (n-c) \ge 2^{n-c-1}$, we obtain $d(D, D'') \ge 2(2^c - 1)(2^{n-c-1} - 1) 
= 2(2^{n-1} + 1 - (2^c + 2^{n-c-1}))$.

The expression $2^c + 2^{n-c-1}$ is convex in $c$ for $1 \le c \le n-2$ 
and is maximized at $c=1$ or $c=n-2$. 
Therefore, $d(D, D'') \ge 2(2^{n-1} + 1 - (2^{n-2} + 2)) 
= 2^{n-1} - 2$,
which contradicts the assumption $d(D, D'') < 2^{n-1} - 2$. 
Hence, $D''$ cannot satisfy case (i).

Case (ii) is symmetric to case (i). 
The difference is that the roles of $Y_1$ and $Y_2$ are interchanged, 
and the roles of $Y_3$ and $Y_4$ are interchanged (see Possibilities~ 1 and 2 of Figure~\ref{fig:skeleton-of-M''}). 
In this case, the set of common nodes lying between $v_a$ and $v_b$ 
in both $D$ and $D''$ is $Y_3$, rather than $Y_2$ as in case (i).
After this relabeling, the counting argument is identical to that in case (i). 
Therefore, by the same reasoning, 
$D''$ cannot satisfy case (ii). For the sake of completeness, we are providing the proof.

Suppose $D''$ satisfies case (ii), so $1< c \leq n-1$. For $Z \subseteq V\setminus \{v_a, v_b\}$, by \Cref{lem:d-separation-in-paths}, $v_a$ is d-connected with $v_b$ given $Z$ in $D$ iff $Z \cap (Y_2 \cup Y_3) = \mathcal{C}(D) \cap (Y_2 \cup Y_3)$. The number of such $Z$ is $2^{|Y_1| + |Y_4|}$. Similarly, in $D''$, $v_a$ is d-connected with $v_j$ given $Z$ iff $Z \cap (Y_1 \cup Y_3) = \mathcal{C}(D'') \cap (Y_1 \cup Y_3)$. The number of such $Z$ is $2^{|Y_2| + |Y_4|}$. 

If $\mathcal{C}(D) \cap Y_3 \neq \mathcal{C}(D'') \cap Y_3$, then for all $Z \subseteq V\setminus \{v_a, v_b\}$ such that either $Z \cap (Y_2 \cup Y_3) = \mathcal{C}(D) \cap (Y_2 \cup Y_3)$ or $Z \cap (Y_1 \cup Y_3) = \mathcal{C}(D'') \cap (Y_1 \cup Y_3)$, $v_a$ is d-connected with $v_b$ given $Z$ in exactly one of $D$ and $D''$. The number of such $Z$ is $2^{|Y_1| + |Y_4|} + 2^{|Y_2| + |Y_4|} = 2^{|Y_4|}(2^{|Y_1|} + 2^{|Y_2|}) = 2^{n-b}(2^{a-1} + 2^{c-a})$.

Suppose $\mathcal{C}(D) \cap Y_3 = \mathcal{C}(D'') \cap Y_3$. For $Z \subseteq V\setminus \{v_a, v_b\}$,  $Z \cap (Y_2 \cup Y_3) = \mathcal{C}(D) \cap (Y_2 \cup Y_3)$ and $Z \cap (Y_1 \cup Y_3) = \mathcal{C}(D'') \cap (Y_1 \cup Y_3)$ iff $Z \cap (Y_1 \cup Y_2 \cup Y_3) = (\mathcal{C}(D) \cap (Y_2 \cup Y_3)) \cup \mathcal{C}(D'') \cap (Y_1 \cup Y_3)$. The number of such $Z$ is $2^{|Y_4|}$. Therefore, the number of $Z$ for which $v_a$ is d-connected with $v_b$ given $Z$ in $D$ but d-separated in $D''$ is $2^{|Y_1| + |Y_4|} - 2^{|Y_4|} = 2^{|Y_4|}(2^{|Y_1|} - 1) = 2^{n-b}(2^{a-1} - 1)$. Similarly, the number of $Z$ for which $v_a$ is d-connected with $v_b$ given $Z$ in $D''$ but d-separated in $D$ is $2^{|Y_2| + |Y_4|} - 2^{|Y_4|} = 2^{|Y_4|}(2^{|Y_2|} - 1) = 2^{n-b}(2^{c-a} - 1)$. This implies there are $2^{n-b}(2^{a-1} + 2^{c-a} - 2)$ many $Z$ for which $v_a$ is d-connected with $v_b$ given $Z$ in exactly one of $D$ and $D''$. 

This implies that there exists at least $2^{n-b}(2^{a-1} + 2^{c-a} - 2)$ many $Z$ such that $v_a$ is d-connected with $v_b$ given $Z$ in exactly one of $D$ and $D''$. This implies $|\{Z \subseteq V\setminus \{v_a, v_b\}: T_D(v_a, v_b, Z) \neq T_{D''}(v_a, v_b, Z)\}| \geq 2^{n-b}(2^{a-1} + 2^{c-a} - 2)$. Therefore,  
\begin{align*}
    d(D, D'') &= \sum_{1\leq a < b \leq n} \big|\{Z \subseteq V \setminus \{v_a, v_b\}: T_{D}(v_a, v_b, Z) \neq T_{D''}(v_a, v_b, Z)\}\big| \\
  &\geq \sum_{a = 1}^{c}{\sum_{b = c+1}^{n}}  \big|\{Z \subseteq V \setminus \{v_a, v_b\}: T_{D}(v_a, v_b, Z) \neq T_{D''}(v_a, v_b, Z)\}\big| \\
  &= \sum_{a = 1}^{c}{\sum_{b = c+1}^{n}{2^{n-b}(2^{a-1} + 2^{c-a} - 2)}}\\
              &= \sum_{a = 1}^{c}{(2^{a-1} + 2^{c-a} - 2)} \times \sum_{b = c+1}^{n}{2^{n-b}}\\
              &= ((2^c - 1) + (2^c-1) - 2c) \times (2^{n-c} - 1)\\
              &= 2(2^c -1 - c) \times (2^{n-c} - 1).
\end{align*}
Since $2^{c} - c \ge 2^{c-1}$ for $2\leq c\leq n-1$, we obtain $d(D, D'') \ge 2(2^{c -1}- 1)(2^{n-c} - 1) 
= 2(2^{n-1} + 1 - (2^{c-1} + 2^{n-c}))$.

The expression $2^{c-1} + 2^{n-c}$ is convex in $c$ for $2 \le c \le n-1$ 
and is maximized at $c=2$ or $c=n-1$. 
Therefore, $d(D, D'') \ge 2(2^{n-1} + 1 - (2^{n-2} + 2)) 
= 2^{n-1} - 2$,
which contradicts the assumption $d(D, D'') < 2^{n-1} - 2$. 
Hence, $D''$ cannot satisfy case (ii).

Now, suppose $D''$ satisfies case (iii). 
For $Z \subseteq V\setminus \{v_a, v_b\}$, by \Cref{lem:d-separation-in-paths}, $v_a$ is d-connected with $v_b$ given $Z$ in $D$ iff $Z \cap (Y_2 \cup Y_3) = \mathcal{C}(D) \cap (Y_2 \cup Y_3)$. The number of such $Z$ is $2^{|Y_1| + |Y_4|}$. Similarly, in $D''$, $v_a$ is d-connected with $v_j$ given $Z$ iff $Z \cap (Y_1 \cup Y_4) = \mathcal{C}(D'') \cap (Y_1 \cup Y_4)$. The number of such $Z$ is $2^{|Y_2| + |Y_3|}$. 

 For $Z \subseteq V\setminus \{v_a, v_b\}$,  $Z \cap (Y_2 \cup Y_3) = \mathcal{C}(D) \cap (Y_2 \cup Y_3)$ and $Z \cap (Y_1 \cup Y_4) = \mathcal{C}(D'') \cap (Y_1 \cup Y_4)$ iff $Z \cap (Y_1 \cup Y_2 \cup Y_3 \cup Y_4) = (\mathcal{C}(D) \cap (Y_2 \cup Y_3)) \cup \mathcal{C}(D'') \cap (Y_1 \cup Y_4)$. $Z = (\mathcal{C}(D) \cap (Y_2 \cup Y_3)) \cup \mathcal{C}(D'') \cap (Y_1 \cup Y_4)$ is the only such $Z$. Therefore, the number of $Z$ for which $v_a$ is d-connected with $v_b$ given $Z$ in $D$ but d-separated in $D''$ is $2^{|Y_1| + |Y_4|} - 1 = 2^{n+a-b-1} - 1$. Similarly, the number of $Z$ for which $v_a$ is d-connected with $v_b$ given $Z$ in $D''$ but d-separated in $D$ is $2^{|Y_2| + |Y_3|} - 1 =  2^{b-a-1} - 1$. This implies there are $2^{n+a-b-1} + 2^{b-a-1} - 2$ many $Z$ for which $v_a$ is d-connected with $v_b$ given $Z$ in exactly one of $D$ and $D''$. 
 This implies $|\{Z \subseteq V\setminus \{v_a, v_b\}: T_D(v_a, v_b, Z) \neq T_{D''}(v_a, v_b, Z)\}| = 2^{n+a-b-1} + 2^{b-a-1} - 2$. Therefore,  
\begin{align*}
    d(D, D'') &= \sum_{1\leq a < b \leq n} \big|\{Z \subseteq V \setminus \{v_a, v_b\}: T_{D}(v_a, v_b, Z) \neq T_{D''}(v_a, v_b, Z)\}\big| \\
  &\geq \sum_{a = 1}^{c}{\sum_{b = c+1}^{n}}  \big|\{Z \subseteq V \setminus \{v_a, v_b\}: T_{D}(v_a, v_b, Z) \neq T_{D''}(v_a, v_b, Z)\}\big| \\
  &= \sum_{a = 1}^{c}{\sum_{b = c+1}^{n}{2^{n+a-b-1} + 2^{b-a-1} - 2}}\\
              &= \sum_{a = 1}^{c}{2^{a-1}(2^{n-c} - 1) + 2^{c-a}(2^{n-c} - 1) - 2(n-c))}\\
              &= (2^{n-c} - 1)(2^c - 1) + (2^{n-c} - 1) (2^c -1) - 2(n-c)c\\
              &= 2((2^{n-c} - 1)(2^c - 1) -  (n-c)c)\\
              &= 2(2^n + 1 - (2^{n-c} + 2^c + (n-c) c)).
\end{align*}

The expression $2^{n-c} + 2^c + (n-c) c$ is convex in $c$ for $1 \le c \le n-1$ 
and is maximized at $c=1$ or $c=n-1$. 
Therefore, $d(D, D'') \ge 2(2^n + 1 - (2^{n-1} + 2 + (n-1))) =  2(2^{n-1} - n) = (2^{n-1}-2) + (2^{n-1} - 2(n-1))$.

For $n \geq 3$, $2^{n-1} - 2(n-1) \geq 0$. Therefore, $d(D, D') \geq 2^{n-1}-2$,
which contradicts the assumption $d(D, D'') < 2^{n-1} - 2$. 
Hence, $D''$ cannot satisfy case (iii).

This shows that there is no Bayesian network $D''$ with a path skeleton such that 
$d(D, D'') < d(D,D')$. 
Hence, $D'$ is the closest Bayesian network to $D$ among all Bayesian networks with a path skeleton. 
This completes the proof of Theorem~\ref{thm:closest-MEC-for-path-graphs-complete}.
\end{proof}

%% file: fig-path.tex
\begin{figure}[t]
\centering
\begin{tikzpicture}[
    scale=0.9,
    transform shape,
    every node/.style={circle, draw, minimum size=7mm, inner sep=0pt}
]

\node[draw=red] (u1) at (0,0) {$v_1$};
\node[draw=none, left=0.1cm of u1] (u0) {skeleton of $D$:};

\node[draw=red,  right=0.5cm of u1] (u2) {$v_2$};
\node[right=0.5cm of u2] (u3) {$v_3$};
\node[  right=0.5cm of u3] (u4) {$v_4$};
\node[right=0.5cm of u4] (u5) {$\ldots$};
\node[right=0.5cm of u5] (u6) {$v_n$};


\draw[-, thick] (u1) -- (u2);
\draw[-, thick] (u2) -- (u3);
\draw[-, thick] (u3) -- (u4);
\draw[-, thick] (u4) -- (u5);
\draw[thick] (u5) -- (u6);

\node[draw=red, below=0.5cm of u1] (v2) {$v_2$};
\node[draw=none, left=0.1cm of v2] (v0) {skeleton of $D'$:};
\node[draw=red, right=0.5cm of v2] (v1) {$v_1$};
\node[right=0.5cm of v1] (v3) {$v_3$};
\node[  right=0.5cm of v3] (v4) {$v_4$};
\node[right=0.5cm of v4] (v5) {$\ldots$};
\node[right=0.5cm of v5] (v6) {$v_n$};

\draw[thick] (v2) -- (v1);
\draw[thick] (v1) -- (v3);
\draw[-, thick] (v3) -- (v4);
\draw[-, thick] (v4) -- (v5);
\draw[thick] (v5) -- (v6);

\end{tikzpicture}
\caption{Skeletons of $D$ and $D'$.}
\label{fig:path}
\end{figure}
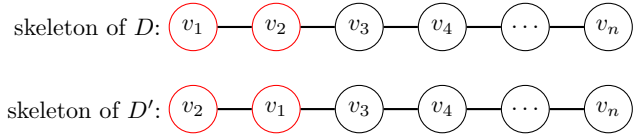

%% file: fig-path-example.tex
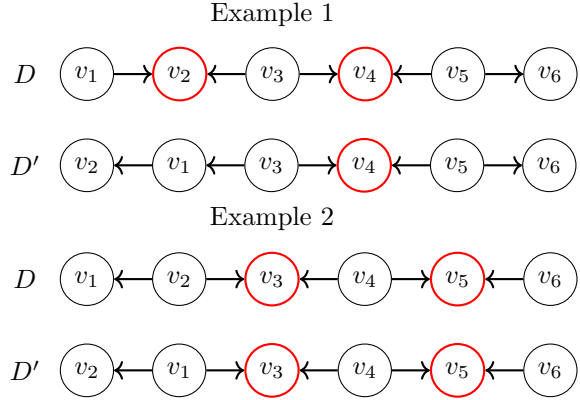
\begin{figure}[t]
\centering
\begin{tikzpicture}[
    scale=1,
    transform shape,
    every node/.style={circle, draw, minimum size=7mm, inner sep=0pt}
]


\node (u1) at (0,0) {$v_1$};
\node[draw=none, left=0.1cm of u1] (u0) {$D$};
\node[draw=none, below=0.5cm of u0] {$D'$};

\node[draw=red, thick, right=0.5cm of u1] (u2) {$v_2$};
\node[right=0.5cm of u2] (u3) {$v_3$};
\node[draw=red, thick, right=0.5cm of u3] (u4) {$v_4$};
\node[right=0.5cm of u4] (u5) {$v_5$};
\node[right=0.5cm of u5] (u6) {$v_6$};

\node[draw=none, above= -0.4cm of u3] {Example 1};

\draw[->, thick] (u1) -- (u2);
\draw[<-, thick] (u2) -- (u3);
\draw[->, thick] (u3) -- (u4);
\draw[<-, thick] (u4) -- (u5);
\draw[->, thick] (u5) -- (u6);

\node[below=0.5cm of u1] (v2) {$v_2$};
\node[ right=0.5cm of v2] (v1) {$v_1$};
\node[right=0.5cm of v1] (v3) {$v_3$};
\node[draw=red, thick, right=0.5cm of v3] (v4) {$v_4$};
\node[right=0.5cm of v4] (v5) {$v_5$};
\node[right=0.5cm of v5] (v6) {$v_6$};

\draw[<-, thick] (v2) -- (v1);
\draw[<-, thick] (v1) -- (v3);
\draw[->, thick] (v3) -- (v4);
\draw[<-, thick] (v4) -- (v5);
\draw[->, thick] (v5) -- (v6);


\begin{scope}[xshift=6cm]

\node[below=0.8cm of v2] (a1) {$v_1$};
\node[draw=none, left=0.1cm of a1] (a0) {$D$};
\node[draw=none, below=0.5cm of a0] {$D'$};

\node[right=0.5cm of a1] (a2) {$v_2$};
\node[draw=red, thick, right=0.5cm of a2] (a3) {$v_3$};
\node[right=0.5cm of a3] (a4) {$v_4$};
\node[draw=red, thick, right=0.5cm of a4] (a5) {$v_5$};
\node[right=0.5cm of a5] (a6) {$v_6$};

\node[draw=none, above=-0.4cm of a3] {Example 2};

\draw[<-, thick] (a1) -- (a2);
\draw[->, thick] (a2) -- (a3);
\draw[<-, thick] (a3) -- (a4);
\draw[->, thick] (a4) -- (a5);
\draw[<-, thick] (a5) -- (a6);

\node[below=0.5 cm of a1] (b1) {$v_2$};
\node[right=0.5cm of b1] (b2) {$v_1$};
\node[draw=red, thick, right=0.5cm of b2] (b3) {$v_3$};
\node[right=0.5cm of b3] (b4) {$v_4$};
\node[draw=red, thick, right=0.5cm of b4] (b5) {$v_5$};
\node[right=0.5cm of b5] (b6) {$v_6$};

\draw[<-, thick] (b1) -- (b2);
\draw[->, thick] (b2) -- (b3);
\draw[<-, thick] (b3) -- (b4);
\draw[->, thick] (b4) -- (b5);
\draw[<-, thick] (b5) -- (b6);

\end{scope}

\end{tikzpicture}
\caption{Examples of $D$ and $D'$.}
\label{fig:path-example}
\end{figure}

%% file: fig-skeleton-of-M.tex
\begin{figure}[t]
\centering
\begin{tikzpicture}[
    scale=0.6,
    transform shape,
    every node/.style={circle, draw, minimum size=7mm, inner sep=0pt}
]

\node[draw=none] (x0) at (0,0) {$\skel{D}$:};
\node[draw=none, below=0cm of x0] (a0) {Possibility 1:};
\node[draw=none, below=0cm of a0] (b0) {Possibility 2:};
\node[draw=none, below=0cm of b0] (c0) {Possibility 3:};

\node[right=0.5cm of x0] (x1) {$v_1$};
\node[right=0.5cm of x1] (x2) {$\ldots$};
\node[right=0.5cm of x2] (x3) {$v_a$};
\node[right=0.5cm of x3] (x4) {$\ldots$};
\node[right=0.5cm of x4] (x5) {$v_c$};
\node[right=0.5cm of x5] (x6) {$v_{c+1}$};
\node[right=0.5cm of x6] (x7) {$\ldots$};
\node[right=0.5cm of x7] (x8) {$v_b$};
\node[right=0.5cm of x8] (x9) {$\ldots$};
\node[right=0.5cm of x9] (x10) {$v_n$};
\draw[-, thick] (x1) -- (x2);
\draw[-, thick] (x2) -- (x3);
\draw[-, thick] (x3) -- (x4);
\draw[-, thick] (x4) -- (x5);
\draw[-, thick] (x5) -- (x6);
\draw[-, thick] (x6) -- (x7);
\draw[-, thick] (x7) -- (x8);
\draw[-, thick] (x8) -- (x9);
\draw[-, thick] (x9) -- (x10);
\node[
    rectangle,
    draw=red,
    dash pattern=on 2pt off 2pt,
    line width=0.8pt,
    rounded corners,
    fit=(x1)(x2),
    inner sep=2pt
] (R1) {};

\node[
    rectangle,
    draw=blue,
    dash pattern=on 2pt off 2pt,
    line width=0.8pt,
    rounded corners,
    fit=(x4)(x5),
    inner sep=2pt
] (R2) {};

\node[
    rectangle,
    draw=green,
    dash pattern=on 2pt off 2pt,
    line width=0.8pt,
    rounded corners,
    fit=(x6)(x7),
    inner sep=2pt
] (R3) {};

\node[
    rectangle,
    draw=orange,
    dash pattern=on 2pt off 2pt,
    line width=0.8pt,
    rounded corners,
    fit=(x9)(x10),
    inner sep=2pt
] (R4) {};

\node[draw=none, above=-4pt of R1] {$Y_1$};
\node[draw=none, above=-4pt of R2] {$Y_2$};
\node[draw=none, above=-4pt of R3] {$Y_3$};
\node[draw=none, above=-4pt of R4] {$Y_4$};

\node[right=0.5cm of a0] (a1) {$v_1$};
\node[right=0.5cm of a1] (a2) {$\ldots$};
\node[right=0.5cm of a2] (a3) {$v_a$};
\node[right=0.5cm of a3] (a4) {$\ldots$};
\node[right=0.5cm of a4] (a5) {$v_c$};
\node[right=0.5cm of a5] (a6) {$v_{n}$};
\node[right=0.5cm of a6] (a7) {$\ldots$};
\node[right=0.5cm of a7] (a8) {$v_b$};
\node[right=0.5cm of a8] (a9) {$\ldots$};
\node[right=0.5cm of a9] (a10) {$v_{c+1}$};
\draw[-, thick] (a1) -- (a2);
\draw[-, thick] (a2) -- (a3);
\draw[-, thick] (a3) -- (a4);
\draw[-, thick] (a4) -- (a5);
\draw[-, thick] (a5) -- (a6);
\draw[-, thick] (a6) -- (a7);
\draw[-, thick] (a7) -- (a8);
\draw[-, thick] (a8) -- (a9);
\draw[-, thick] (a9) -- (a10);
\node[
    rectangle,
    draw=red,
    dash pattern=on 2pt off 2pt,
    line width=0.8pt,
    rounded corners,
    fit=(a1)(a2),
    inner sep=2pt
] (R11) {};

\node[
    rectangle,
    draw=blue,
    dash pattern=on 2pt off 2pt,
    line width=0.8pt,
    rounded corners,
    fit=(a4)(a5),
    inner sep=2pt
] (R21) {};

\node[
    rectangle,
    draw=orange,
    dash pattern=on 2pt off 2pt,
    line width=0.8pt,
    rounded corners,
    fit=(a6)(a7),
    inner sep=2pt
] (R31) {};

\node[
    rectangle,
    draw=green,
    dash pattern=on 2pt off 2pt,
    line width=0.8pt,
    rounded corners,
    fit=(a9)(a10),
    inner sep=2pt
] (R41) {};

\node[draw=none, above=-4pt of R11] {$Y_1$};
\node[draw=none, above=-4pt of R21] {$Y_2$};
\node[draw=none, above=-4pt of R31] {$Y_4$};
\node[draw=none, above=-4pt of R41] {$Y_3$};
\node[right=0.5cm of b0] (b1) {$v_c$};
\node[right=0.5cm of b1] (b2) {$\ldots$};
\node[right=0.5cm of b2] (b3) {$v_a$};
\node[right=0.5cm of b3] (b4) {$\ldots$};
\node[right=0.5cm of b4] (b5) {$v_1$};
\node[right=0.5cm of b5] (b6) {$v_{c+1}$};
\node[right=0.5cm of b6] (b7) {$\ldots$};
\node[right=0.5cm of b7] (b8) {$v_b$};
\node[right=0.5cm of b8] (b9) {$\ldots$};
\node[right=0.5cm of b9] (b10) {$v_{n}$};
\draw[-, thick] (b1) -- (b2);
\draw[-, thick] (b2) -- (b3);
\draw[-, thick] (b3) -- (b4);
\draw[-, thick] (b4) -- (b5);
\draw[-, thick] (b5) -- (b6);
\draw[-, thick] (b6) -- (b7);
\draw[-, thick] (b7) -- (b8);
\draw[-, thick] (b8) -- (b9);
\draw[-, thick] (b9) -- (b10);
\node[
    rectangle,
    draw=blue,
    dash pattern=on 2pt off 2pt,
    line width=0.8pt,
    rounded corners,
    fit=(b1)(b2),
    inner sep=2pt
] (R12) {};

\node[
    rectangle,
    draw=red,
    dash pattern=on 2pt off 2pt,
    line width=0.8pt,
    rounded corners,
    fit=(b4)(b5),
    inner sep=2pt
] (R22) {};

\node[
    rectangle,
    draw=green,
    dash pattern=on 2pt off 2pt,
    line width=0.8pt,
    rounded corners,
    fit=(b6)(b7),
    inner sep=2pt
] (R32) {};

\node[
    rectangle,
    draw=orange,
    dash pattern=on 2pt off 2pt,
    line width=0.8pt,
    rounded corners,
    fit=(b9)(b10),
    inner sep=2pt
] (R42) {};

\node[draw=none, above=-4pt of R12] {$Y_2$};
\node[draw=none, above=-4pt of R22] {$Y_1$};
\node[draw=none, above=-4pt of R32] {$Y_3$};
\node[draw=none, above=-4pt of R42] {$Y_4$};

\node[right=0.5cm of c0] (c1) {$v_c$};
\node[right=0.5cm of c1] (c2) {$\ldots$};
\node[right=0.5cm of c2] (c3) {$v_a$};
\node[right=0.5cm of c3] (c4) {$\ldots$};
\node[right=0.5cm of c4] (c5) {$v_1$};
\node[right=0.5cm of c5] (c6) {$v_{n}$};
\node[right=0.5cm of c6] (c7) {$\ldots$};
\node[right=0.5cm of c7] (c8) {$v_b$};
\node[right=0.5cm of c8] (c9) {$\ldots$};
\node[right=0.5cm of c9] (c10) {$v_{c+1}$};
\draw[-, thick] (c1) -- (c2);
\draw[-, thick] (c2) -- (c3);
\draw[-, thick] (c3) -- (c4);
\draw[-, thick] (c4) -- (c5);
\draw[-, thick] (c5) -- (c6);
\draw[-, thick] (c6) -- (c7);
\draw[-, thick] (c7) -- (c8);
\draw[-, thick] (c8) -- (c9);
\draw[-, thick] (c9) -- (c10);
\node[
    rectangle,
    draw=blue,
    dash pattern=on 2pt off 2pt,
    line width=0.8pt,
    rounded corners,
    fit=(c1)(c2),
    inner sep=2pt
] (R13) {};

\node[
    rectangle,
    draw=red,
    dash pattern=on 2pt off 2pt,
    line width=0.8pt,
    rounded corners,
    fit=(c4)(c5),
    inner sep=2pt
] (R23) {};

\node[
    rectangle,
    draw=orange,
    dash pattern=on 2pt off 2pt,
    line width=0.8pt,
    rounded corners,
    fit=(c6)(c7),
    inner sep=2pt
] (R33) {};

\node[
    rectangle,
    draw=green,
    dash pattern=on 2pt off 2pt,
    line width=0.8pt,
    rounded corners,
    fit=(c9)(c10),
    inner sep=2pt
] (R43) {};

\node[draw=none, above=-4pt of R13] {$Y_2$};
\node[draw=none, above=-4pt of R23] {$Y_1$};
\node[draw=none, above=-4pt of R33] {$Y_4$};
\node[draw=none, above=-4pt of R43] {$Y_3$};

\end{tikzpicture}
\caption{Possible skeletons of $D''$.}
\label{fig:skeleton-of-M''}
\end{figure}
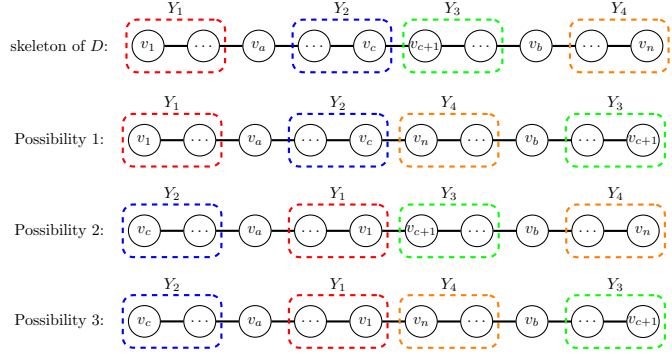